\theoremstyle{plain}
\newtheorem{theorem}{Theorem}
\theoremstyle{plain}
\providecommand{\tabularnewline}{\\}
\providecommand{\algorithmname}{Algorithm}
\date{}
\providecommand{\lemmaname}{Lemma}
\providecommand{\theoremname}{Theorem}
\newtheorem{proposition}{Proposition} [section]
\newcommand{\tick}{\textcolor{ForestGreen}{\ding{51}}}
\newcommand{\cross}{\textcolor{BrickRed}{\ding{55}}}
\def\eqref#1{equation~\ref{#1}}
\def\1{\bm{1}}
\def\vt{{\bm{t}}}
\DeclareMathAlphabet{\mathsfit}{\encodingdefault}{\sfdefault}{m}{sl}
\SetMathAlphabet{\mathsfit}{bold}{\encodingdefault}{\sfdefault}{bx}{n}
\newcommand{\E}{\mathbb{E}}
\newcommand{\KL}{D_{\mathrm{KL}}}
\newcommand{\Var}{\mathrm{Var}}
\newcommand{\Cov}{\mathrm{Cov}}
\DeclareMathOperator*{\argmax}{arg\,max}
\DeclareMathOperator*{\argmin}{arg\,min}
\begin{document}

\twocolumn[
\icmltitle{Confident Sinkhorn Allocation for Pseudo-Labeling}

\begin{icmlauthorlist}
\icmlauthor{Vu Nguyen}{yyy}
\icmlauthor{Hisham Husain}{yyy}
\icmlauthor{Sachin Farfade}{yyy}
\icmlauthor{Anton van den Hengel}{yyy}

\end{icmlauthorlist}

\icmlaffiliation{yyy}{Amazon}

\icmlcorrespondingauthor{Vu Nguyen}{vu@ieee.org}

\icmlkeywords{Machine Learning, Semi-supervised Learning, Tabular Data}

\vskip 0.3in

]
\printAffiliationsAndNotice{} 

\global\long\def\se{\hat{\text{se}}}%

\global\long\def\interior{\text{int}}%

\global\long\def\boundary{\text{bd}}%

\global\long\def\new{\text{*}}%

\global\long\def\stir{\text{Stirl}}%

\global\long\def\dist{d}%

\global\long\def\HX{\entro\left(X\right)}%
 
\global\long\def\entropyX{\HX}%

\global\long\def\HY{\entro\left(Y\right)}%
 
\global\long\def\entropyY{\HY}%

\global\long\def\HXY{\entro\left(X,Y\right)}%
 
\global\long\def\entropyXY{\HXY}%

\global\long\def\mutualXY{\mutual\left(X;Y\right)}%
 
\global\long\def\mutinfoXY{\mutualXY}%

\global\long\def\xnew{y}%

\global\long\def\bm{\mathbf{m}}%

\global\long\def\bj{\mathbf{j}}%

\global\long\def\b0{\textbf{0}}%

\global\long\def\bx{\textbf{x}}%
\global\long\def\bl{\mathbf{l}}%
\global\long\def\bb{\mathbf{b}}%
\global\long\def\ba{\mathbf{a}}%
\global\long\def\bg{\mathbf{g}}%
\global\long\def\bt{\mathbf{t}}%
\global\long\def\bh{\mathbf{h}}%
\global\long\def\br{\textbf{r}}%
\global\long\def\bc{\textbf{c}}%
\global\long\def\b1{\textbf{1}}%

\global\long\def\bw{\textbf{w}}%

\global\long\def\bz{\mathbf{z}}%

\global\long\def\bu{\textbf{u}}%
\global\long\def\bv{\textbf{v}}%

\global\long\def\bs{\boldsymbol{s}}%

\global\long\def\bk{\mathbf{k}}%

\global\long\def\bX{\textbf{X}}%

\global\long\def\tbx{\tilde{\bx}}%

\global\long\def\by{\mathbf{y}}%

\global\long\def\bY{\mathbf{Y}}%

\global\long\def\bZ{\boldsymbol{Z}}%

\global\long\def\bU{\boldsymbol{U}}%


\global\long\def\bn{\boldsymbol{n}}%


\global\long\def\bV{\boldsymbol{V}}%

\global\long\def\bK{\mathbf{K}}%

\global\long\def\bbeta{\gvt{\beta}}%

\global\long\def\bmu{\gvt{\mu}}%

\global\long\def\btheta{\boldsymbol{\theta}}%

\global\long\def\blambda{\boldsymbol{\lambda}}%

\global\long\def\bgamma{\boldsymbol{\gamma}}%

\global\long\def\bpsi{\boldsymbol{\psi}}%

\global\long\def\bphi{\boldsymbol{\phi}}%

\global\long\def\bpi{\boldsymbol{\pi}}%

\global\long\def\eeta{\boldsymbol{\eta}}%

\global\long\def\bomega{\boldsymbol{\omega}}%

\global\long\def\bepsilon{\boldsymbol{\epsilon}}%

\global\long\def\btau{\boldsymbol{\tau}}%

\global\long\def\bSigma{\gvt{\Sigma}}%

\global\long\def\realset{\mathbb{R}}%

\global\long\def\realn{\realset^{n}}%

\global\long\def\integerset{\mathbb{Z}}%

\global\long\def\natset{\integerset}%

\global\long\def\integer{\integerset}%

\global\long\def\natn{\natset^{n}}%

\global\long\def\rational{\mathbb{Q}}%

\global\long\def\rationaln{\rational^{n}}%

\global\long\def\complexset{\mathbb{C}}%

\global\long\def\comp{\complexset}%

\global\long\def\compl#1{#1^{\text{c}}}%

\global\long\def\and{\cap}%

\global\long\def\compn{\comp^{n}}%

\global\long\def\comb#1#2{\left({#1\atop #2}\right) }%

\global\long\def\nchoosek#1#2{\left({#1\atop #2}\right)}%

\global\long\def\param{\vt w}%

\global\long\def\Param{\Theta}%

\global\long\def\meanparam{\gvt{\mu}}%

\global\long\def\Meanparam{\mathcal{M}}%

\global\long\def\meanmap{\mathbf{m}}%

\global\long\def\logpart{A}%

\global\long\def\simplex{\Delta}%

\global\long\def\simplexn{\simplex^{n}}%

\global\long\def\dirproc{\text{DP}}%

\global\long\def\ggproc{\text{GG}}%

\global\long\def\DP{\text{DP}}%

\global\long\def\ndp{\text{nDP}}%

\global\long\def\hdp{\text{HDP}}%

\global\long\def\gempdf{\text{GEM}}%

\global\long\def\ei{\text{EI}}%

\global\long\def\rfs{\text{RFS}}%

\global\long\def\bernrfs{\text{BernoulliRFS}}%

\global\long\def\poissrfs{\text{PoissonRFS}}%

\global\long\def\grad{\gradient}%
 
\global\long\def\gradient{\nabla}%

\global\long\def\cpr#1#2{\Pr\left(#1\ |\ #2\right)}%

\global\long\def\var{\texttt{Var}}%

\global\long\def\Var#1{\text{Var}\left[#1\right]}%

\global\long\def\cov{\text{Cov}}%

\global\long\def\Cov#1{\cov\left[ #1 \right]}%

\global\long\def\COV#1#2{\underset{#2}{\cov}\left[ #1 \right]}%

\global\long\def\corr{\text{Corr}}%

\global\long\def\sst{\text{T}}%

\global\long\def\SST{\sst}%

\global\long\def\ess{\mathbb{E}}%

\global\long\def\Ess#1{\ess\left[#1\right]}%

\global\long\def\fisher{\mathcal{F}}%

\global\long\def\bfield{\mathcal{B}}%
 
\global\long\def\borel{\mathcal{B}}%

\global\long\def\bernpdf{\text{Bernoulli}}%

\global\long\def\betapdf{\text{Beta}}%

\global\long\def\dirpdf{\text{Dir}}%

\global\long\def\gammapdf{\text{Gamma}}%

\global\long\def\gaussden#1#2{\text{Normal}\left(#1, #2 \right) }%

\global\long\def\gauss{\mathbf{N}}%

\global\long\def\gausspdf#1#2#3{\text{Normal}\left( #1 \lcabra{#2, #3}\right) }%

\global\long\def\multpdf{\text{Mult}}%

\global\long\def\poiss{\text{Pois}}%

\global\long\def\poissonpdf{\text{Poisson}}%

\global\long\def\pgpdf{\text{PG}}%

\global\long\def\wshpdf{\text{Wish}}%

\global\long\def\iwshpdf{\text{InvWish}}%

\global\long\def\nwpdf{\text{NW}}%

\global\long\def\niwpdf{\text{NIW}}%

\global\long\def\studentpdf{\text{Student}}%

\global\long\def\unipdf{\text{Uni}}%

\global\long\def\transp#1{\transpose{#1}}%
 
\global\long\def\transpose#1{#1^{\mathsf{T}}}%

\global\long\def\mgt{\succ}%

\global\long\def\mge{\succeq}%

\global\long\def\idenmat{\mathbf{I}}%

\global\long\def\trace{\mathrm{tr}}%

\global\long\def\argmax#1{\underset{_{#1}}{\text{argmax}} }%

\global\long\def\argmin#1{\underset{_{#1}}{\text{argmin}\ } }%

\global\long\def\diag{\text{diag}}%

\global\long\def\norm{}%

\global\long\def\spn{\text{span}}%

\global\long\def\vtspace{\mathcal{V}}%

\global\long\def\field{\mathcal{F}}%
 
\global\long\def\ffield{\mathcal{F}}%

\global\long\def\inner#1#2{\left\langle #1,#2\right\rangle }%
 
\global\long\def\iprod#1#2{\inner{#1}{#2}}%

\global\long\def\dprod#1#2{#1 \cdot#2}%

\global\long\def\norm#1{\left\Vert #1\right\Vert }%

\global\long\def\entro{\mathbb{H}}%

\global\long\def\entropy{\mathbb{H}}%

\global\long\def\Entro#1{\entro\left[#1\right]}%

\global\long\def\Entropy#1{\Entro{#1}}%

\global\long\def\mutinfo{\mathbb{I}}%

\global\long\def\relH{\mathit{D}}%

\global\long\def\reldiv#1#2{\relH\left(#1||#2\right)}%

\global\long\def\KL{KL}%

\global\long\def\KLdiv#1#2{\KL\left(#1\parallel#2\right)}%
 
\global\long\def\KLdivergence#1#2{\KL\left(#1\ \parallel\ #2\right)}%

\global\long\def\crossH{\mathcal{C}}%
 
\global\long\def\crossentropy{\mathcal{C}}%

\global\long\def\crossHxy#1#2{\crossentropy\left(#1\parallel#2\right)}%

\global\long\def\breg{\text{BD}}%

\global\long\def\lcabra#1{\left|#1\right.}%

\global\long\def\lbra#1{\lcabra{#1}}%

\global\long\def\rcabra#1{\left.#1\right|}%

\global\long\def\rbra#1{\rcabra{#1}}%

\newcommand{\bracket}[1]{\left({#1}\right)}
\newcommand{\braces}[1]{\left \{ {#1} \right \}}
\newcommand{\card}[1]{\left | {#1} \right |}



\begin{abstract}
Semi-supervised learning is a critical tool in reducing machine learning's dependence on labeled data.  
It has been successfully applied to structured data, such as 
images and natural language, by exploiting the inherent spatial and semantic structure therein with pretrained models or data augmentation.
These methods are not applicable, however, when the data does not have the appropriate structure, or invariances.
Due to their simplicity,  pseudo-labeling (PL) methods can be widely used without any domain assumptions.
However, the greedy mechanism in PL is sensitive to a threshold and can perform poorly if wrong assignments are made due to overconfidence. 
This paper studies theoretically the role of uncertainty to pseudo-labeling and proposes Confident Sinkhorn Allocation (CSA), which identifies the best pseudo-label allocation  via optimal transport to only  samples with high confidence scores.
CSA outperforms the current state-of-the-art in this practically important area of semi-supervised learning.
Additionally, we propose to use the Integral Probability Metrics to extend and improve the existing PAC-Bayes bound which relies on the Kullback-Leibler (KL) divergence, for ensemble models.





\end{abstract}


\section{Introduction} \label{sec:introduction}
The impact of machine learning continues to grow in fields as disparate as biology \citep{libbrecht2015machine,tunyasuvunakool2021highly}, quantum technology \citep{biamonte2017quantum,van2020quantum,nguyen2021deep}, brain stimulation \citep{boutet2021predicting,van2021personalized}, and computer vision \citep{esteva2021deep,yoon2022stripenn,long2022retrieval}.
Much of this impact depends on the availability of large numbers of annotated examples for the machine learning models to be trained on. The data annotation task by which such labeled data is created is often expensive, and sometimes impossible, however. 
Rare genetic diseases, stock market events, and cyber-security threats, for example, are hard to annotate due to the volumes of data involved, the rate at which the significant characteristics change, or both.
 




\paragraph{Related work.}
Fortunately, for some classification tasks, we can overcome a scarcity of labeled data using semi-supervised learning (SSL) \citep{zhu2005semi,huang2021universal,killamsetty2021retrieve,olsson2021classmix}. 
SSL exploits an additional set of unlabeled data with the goal of improving on the performance that might be achieved using labeled data alone \citep{lee2019overcoming,carmon2019unlabeled,ren2020not,islam2021dynamic}.

\uline{Domain specific:}
Semi-supervised learning for image and language data has made rapid progress \citep{oymak2021theoretical,zhou2021semi,sohn2020fixmatch} largely by exploiting the inherent spatial and semantic structure of images \citep{komodakis2018unsupervised} and language \citep{kenton2019bert}. This is achieved typically either using pretext tasks  \citep{komodakis2018unsupervised,alexey2016discriminative} or contrastive learning \citep{van2018representation,chen2020simple}.  
Both approaches assume that specific transformations applied to each data element will not affect the associated label. 


\uline{Greedy pseudo-labeling}:
A simple and effective SSL method is pseudo-labeling (PL) \citep{lee2013pseudo}, which generates `pseudo-labels' for unlabeled samples using a model trained on labeled data. A label $k$ is assigned to an unlabeled sample $\bx_i$ if the predicted class probability is over a predefined threshold $\gamma$ as 
\begin{align}
    y_i^k = \mathbb{1} \Bigl[ p( y_i = k \mid \bx_i ) \ge \gamma \Bigr]
\end{align}
where $\gamma \in [0,1]$ is a threshold used to produce hard labels and $p( y_i = k \mid \bx_i )$ is the predictive probability of the $i$-th data point  belonging to the class $k$.  
A classifier can then be trained using both the original labeled data and the newly pseudo-labeled data.
Pseudo-labeling is naturally an iterative process, with the next round of pseudo-labels being generated using the most-recently trained classifier.  The key advantage of pseudo-labeling  is that it does not inherently require domain assumptions and can be applied to most domains, including tabular data where domain knowledge is not available, as opposed to images and languages.

\begin{table*}
\begin{center}
\caption{Comparison with the related approaches in terms of properties and their relative trade-offs. Using OT, our CSA and SLA are non-greedy since each datapoint will be jointly assigned labels in the presence of other assignments while greedy methods  primarily assign labels based on the given threshold and does not necessarily depend on the other assignments.  } \label{table:comparison_property}
\label{table:summary_of_diff}
\vspace{-7pt}
\scalebox{0.96}{
\begin{tabular}{ c|ccccc } 
\toprule
\textbf{Algorithms}  & \textbf{Not domain specific} &    \textbf{Uncertainty consideration} & \textbf{Non-greedy} \\ \midrule
Pseudo-Labeling \citep{lee2013pseudo} & \tick   & \cross & \cross \\ 
FlexMatch \citep{zhang2021flexmatch}   & \tick   & \cross & \cross \\ 

Vime \citep{yoon2020vime}   & \cross  & \cross & NA\\ 
MixMatch \citep{berthelot2019mixmatch} & \cross &  \cross & NA\\ 
FixMatch \citep{sohn2020fixmatch}   & \cross &  \cross & NA\\ 
Dash \citep{xu2021dash}   & \cross &  \cross & \cross\\ 

Adsh \citep{guo2022class}   & \cross &  \cross & \cross\\

UPS \citep{rizve2020defense}   & \tick  & \tick & \cross\\
SLA \citep{tai2021sinkhorn_sla}   & \tick  & \cross & \tick \\ 
\textbf{CSA}   & \tick &  \tick & \tick\\ 
\bottomrule
\end{tabular}}
\end{center}
\vspace{-12pt}

\end{table*}


\uline{Greedy PL with uncertainty}:
\citet{rizve2020defense}  propose an uncertainty-aware pseudo-label selection (UPS) that aims to reduce the noise in the training process by using the uncertainty score -- together with the probability score  for making assignments:
 \begin{align}
    y_i^k = \mathbb{1} \Bigl[ p( y_i = k \mid \bx_i ) \ge \gamma \Bigr] \; \mathbb{1} \Bigl[ \mathcal{U} \bigl( p( y_i = k \mid \bx_i) \le \gamma_u \bigr) \Bigr]
\end{align}
where $\gamma_u$ is an additional threshold on the uncertainty level and $\mathcal{U}(p)$ is the uncertainty of a prediction $p$.
As shown in \citet{rizve2020defense}, selecting predictions with low uncertainties greatly reduces the effect of poor calibration, thus improving robustness and generalization. 

However, the aforementioned works in PL are \textit{greedy} in assigning the labels by simply comparing the prediction value against a predefined threshold $\gamma$ irrespective of the relative prediction values across samples and classes. Such greedy strategies will be sensitive to the choice of thresholds and even carefully tune the threshold may not result in the same way as non-greedy algorithm, shown in Fig. \ref{fig:viz_csa_pl_different_thresholds}.

\uline{Non-greedy pseudo-labeling}:
 FlexMatch \citep{zhang2021flexmatch} adaptively selects a threshold $\gamma_k$  for each class based on the level of difficulty. This threshold is adapted using  the predictions across classes. However, the selection process is still heuristic in comparing the prediction score with an adjusted threshold. Recently,  \citet{tai2021sinkhorn_sla} provide a novel view in connecting the pseudo-labeling assignment task to optimal transport problem, called SLA, which inspires our work. SLA and FlexMatch  are better than existing PL in that their \textit{non-greedy} label assignments not only use the single prediction value but also consider the relative importance of this value across rows and columns in a holistic way. However, both SLA  and FlexMatch can overconfidently assign labels to noise samples and have not considered utilizing uncertainty values in making assignments.

\textbf{Contributions.} We propose here a semi-supervised learning method that does not require any domain-specific assumption for the data. We hypothesize that this is by far the most common case for the vast volumes of data that exist.  
Our method Confident Sinkhorn Allocation (CSA) is theoretically driven by the role of uncertainty in robust label assignment in SSL. CSA utilizes Sinkhorn's algorithm \citep{cuturi2013sinkhorn} to  assign labels to only the data samples with high confidence scores. By learning the  label assignment with optimal transport, CSA eliminates the need to predefine the heuristic thresholds used in existing pseudo-labeling methods, which can be greedy. 
The proposed CSA is widely applicable to various data domains, and could be used in concert with consistency-based approaches \citep{sohn2020fixmatch}, but is particularly useful for data domain where pretext tasks and data augmentation are not applicable, such as tabular data. 
Finally, we study theoretically the pseudo-labelling process when training on labeled set and predicting unlabeled data using a PAC-Bayes generalization bound. Particularly, we extend and improve  the existing PAC-Bayes bounds with Kullback-Leibler (KL) divergence by presenting the first result making use of the Integral Probability Metrics (see Sec. \ref{sec_PacBayes}). 





\section{Confident Sinkhorn Allocation (CSA)} \label{sec:method}

\begin{figure*}
    \centering
      \vspace{-1pt}
    \includegraphics[width=0.83\textwidth]{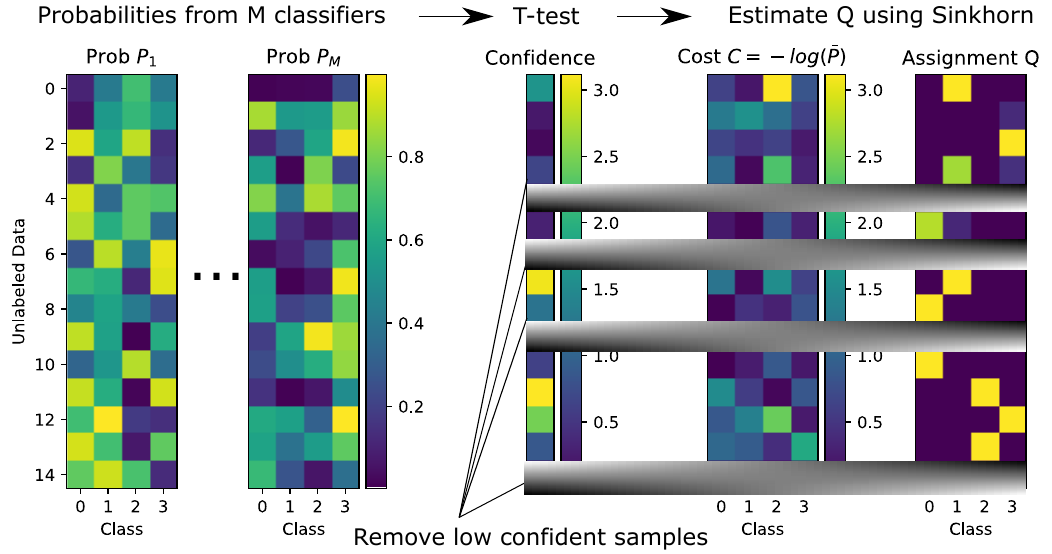}
       \vspace{-4pt}
    \caption{A depiction of CSA in application. We estimate the ensemble of predictions $P$ on unlabeled data using $M$ classifiers which can result in different probabilities. We then identify high-confidence samples by performing a T-test. Next, we estimate the label assignment $Q$ using Sinkhorn's algorithm. The cost $C$ of the optimal transport problem is the negative of the probability averaging across classifiers,  $C=-\log \bar{P}$. We repeat the process on the remaining unlabeled data as required.}
    \label{fig:diagram_csa}
    \vspace{-7pt}
\end{figure*}

We consider the semi-supervised learning setting whereby we have access to a dataset consisting of labeled examples $ \mathcal{D}_l= \{\bx_i, y_i \}_{i=1}^{N_l}$, and unlabeled  examples $\mathcal{D}_u = \{\bx_i \}_{i=1}^{N_u}$ where $\bx_i \in \realset^d$ and $y_i \in \mathcal{Y} =\{1,\ldots,K\} $.  We define also $\mathcal{X} = \{\bx_i\}, \, i=\{1,\ldots,N_l+N_u\}$.
Our goal is to utilize $\mathcal{D}_l \cup \mathcal{D}_u$ to learn a predictor $ f : \mathcal{X} \rightarrow \mathcal{Y}$ that is more
accurate than a predictor trained using labeled data $\mathcal{D}_l$ alone. The summary of our notations is provided in Appendix Table~\ref{table:notation}.



Generating high-quality pseudo-labels is critical to the classification performance, as erroneous label assignment can quickly lead the iterative pseudo-labeling process astray. We provide in Sec. \ref{sec:theoretical_analysis_uncertainty_pl} a theoretical analysis of the role and impact of uncertainty in pseudo-labeling, the first such analysis as far as we are aware.
Based on the theoretical result, we propose two approaches to identify the high-confidence samples for assigning labels and use the Sinkhorn's algorithm to find the best label assignment. We name the proposed algorithm  Confident Sinkhorn Allocation (CSA). We provide a diagram demonstrating CSA in Fig. \ref{fig:diagram_csa}, and a comparison against related algorithms in Table \ref{table:comparison_property}.

\subsection{Analysis of the Effect of Uncertainty in PL\label{sec:theoretical_analysis_uncertainty_pl}}


Our theoretical result extends the result from Theorem 1 of  \citet{yang2020rethinking} in three folds: (i) generalizing from binary to multi classification; (ii) extending from one dimensional to multi-dimensional input; and (iii) taking into account the uncertainty for pseudo-labeling.

We consider a multi-classification problem with the generating feature $\bx \mid y = k \stackrel{\textrm{iid}}{\sim}  \mathcal{N}(\mu_k, \Lambda )$ 
where $\mu_k \in \realset^d$ is the means of the $k$-th classes, $\Lambda$ is the covariance matrix defining the variation of the data noisiness across input dimensions. For simplicity in the analysis, we define $\Lambda = \diag([\sigma_1^2,....\sigma_d^2])$ as a $\realset^{d \times d}$ diagonal matrix which does not change with different classes $k$.
 For the analysis, we make a popular assumption that input features $\bx_i \in \mathcal{D}_l$ and $\mathcal{D}_u$ are sampled i.i.d. from a feature distribution $P_X$, and the labeled data pairs $(\bx_i,y_i)$ in $\mathcal{D}_l$ are drawn from a joint distribution $P_{X,Y}$. 

Let $\{ \tilde{X}^k_i \}_{i=1}^{\tilde{n}_k}$ be the set of unlabeled data whose pseudo-label is $k$, respectively. Let $\{ I_i^k\}_{i=1}^{\tilde{n}_k}$ be the binary indicator of correct assignments,
such as if $I^k_i=1$, then $\tilde{X}_i^k \sim \mathcal{N}(\mu_k,\Lambda)$. 


We assume the binary indicators $I_i^k$ to be generated from a pseudo-labeling classifier with the corresponding expectations $\mathbb{E}( I^k)$ and variances of $\var( I^k)$ for the generating processes.

In uncertainty estimation literature \citep{der2009aleatory,hullermeier2019aleatoric}, $\Lambda$ is called the \textit{aleatoric uncertainty} (from the observations) and $\var( I^k)$ are the \textit{epistemic uncertainty} (from the model knowledge).

Instead of using linear classifier in the binary case as in \citet{yang2020rethinking}, here we consider a set of probabilistic classifiers: $f_k(\bx_i) := \mathcal{N} (\bx_i | \hat{\theta}_k, \Lambda ), \forall k=\{1,...,K\}$ to classify $y_i = k$ such that $k = \arg \max_{j \in {[1,...,K]} } f_j(\bx_i)$. 
Given the above setup,  it is natural to construct our estimate as 
$\hat{\theta}_k =  \frac{\sum_{i=1}^{\tilde{n}_k} \tilde{X}^k_i} { \tilde{n}_k} $ via the extra unlabeled data. We aim to characterize the estimation error $ \sum_{k=1}^K \left| \hat{\theta}_k - \mu_k  \right|$.

\begin{theorem} \label{theorem_classifier_bound}
For $\delta>0$, our estimate satisfies
   $  \sum_{k=1}^K \left| \hat{\theta}_k - \mu_k  \right| \le \delta $
with a probability at least  $1- 2 \sum_{k=1}^K \sum_{j=1}^d  \exp \Big\{  
    -\frac{ \delta^2 \tilde{n}_k }{8 \sigma_j^2  } \Big\}
    - \sum_{k=1}^K \frac{4 \var(I^k)}{ \delta^2} \left| \mu_k - \mu_{\setminus k} \right|$ where $\setminus k := \{1,...,K\} \setminus {k}$ denotes excluding the index $k$.
\end{theorem}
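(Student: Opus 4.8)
The plan is to split each per-class estimation error $\hat\theta_k-\mu_k$ into a zero-mean Gaussian fluctuation driven by the intrinsic noise $\Lambda$ and a bias term produced by the unlabeled points that were \emph{incorrectly} assigned pseudo-label $k$, and then control these two pieces by a Gaussian (Chernoff) tail bound and by Chebyshev's inequality, respectively. The structural fact to lean on is that $\Lambda$ is the \emph{same} for every class, so that whether or not a point is correctly pseudo-labelled it can be written as a (random) class mean plus one common Gaussian term, $\tilde{X}^k_i = I^k_i\mu_k + (1-I^k_i)\mu_{\setminus k} + \epsilon^k_i$ with $\epsilon^k_i\stackrel{\textrm{iid}}{\sim}\mathcal{N}(\mathbf{0},\Lambda)$. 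Writing $\bar I^k=\tfrac1{\tilde{n}_k}\sum_{i}I^k_i$ and averaging,
\begin{equation*}
\hat\theta_k - \mu_k \;=\; \underbrace{\tfrac1{\tilde{n}_k}\textstyle\sum_{i}\epsilon^k_i}_{\bar\epsilon^k\ :\ \text{fluctuation}} \;+\; \underbrace{(\mu_{\setminus k}-\mu_k)(1-\bar I^k)}_{\text{label-noise bias}} .
\end{equation*}
By the triangle inequality $|\hat\theta_k-\mu_k|\le|\bar\epsilon^k|+|\mu_{\setminus k}-\mu_k|\,|1-\bar I^k|$ (coordinatewise), so it suffices to force each of the two pieces below $\delta/2$ and then union-bound over $k$ and over the coordinates.

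For the fluctuation: conditionally on $\tilde{n}_k$, coordinate $j$ of $\bar\epsilon^k$ is $\mathcal{N}(0,\sigma_j^2/\tilde{n}_k)$, so the sub-Gaussian tail gives $\Pr\!\bigl(|\bar\epsilon^k_j|\ge\delta/2\bigr)\le 2\exp\{-\delta^2\tilde{n}_k/(8\sigma_j^2)\}$, and a union bound over $k\in\{1,\dots,K\}$ and $j\in\{1,\dots,d\}$ produces exactly the first correction term $2\sum_k\sum_j\exp\{-\delta^2\tilde{n}_k/(8\sigma_j^2)\}$. (If one wants to dispense with Gaussianity of the noise, truncating $\epsilon^k_i$ and applying Hoeffding's inequality gives the same exponent up to the constant, which is already loose.)

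For the bias: the empirical accuracy $\bar I^k$ of the pseudo-labeller on the points it assigned to class $k$ has mean $\mathbb{E}(I^k)$ — which for a consistent labeller I take to be (essentially) $1$, so that $1-\bar I^k$ is mean-zero — and second moment controlled by $\var(I^k)$. Chebyshev applied to $1-\bar I^k$ at level $\delta/(2|\mu_k-\mu_{\setminus k}|)$ then gives $\Pr\!\bigl(|\mu_{\setminus k}-\mu_k|\,|1-\bar I^k|\ge\delta/2\bigr)\le 4\var(I^k)|\mu_k-\mu_{\setminus k}|^2/\delta^2$, which — up to the precise normalisation of $\var(I^k)$ and the exponent on $|\mu_k-\mu_{\setminus k}|$, details to reconcile with the paper's conventions — is the second correction term $\sum_k 4\var(I^k)|\mu_k-\mu_{\setminus k}|/\delta^2$. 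A final union bound over the ``fluctuation fails'' and ``bias fails'' events, each budgeted at $\delta/2$, yields the stated probability.

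The main obstacle I anticipate is the bookkeeping around the random indicators and the final aggregation: the counts $\tilde{n}_k$, the sums $\sum_iI^k_i$ and the samples $\tilde{X}^k_i$ are all coupled, so one must condition carefully (on which indices are assigned to class $k$, and on $\tilde{n}_k$) before invoking either tail bound, and then split the single tolerance $\delta$ consistently across the $K$ classes, the $d$ coordinates and the two error sources so that the final bound collapses to the clean stated form; in particular, passing from per-class, per-coordinate control to the aggregate statement $\sum_{k}|\hat\theta_k-\mu_k|\le\delta$ is the step needing the most care. A secondary subtlety is that $\mu_{\setminus k}$ collapses several distinct wrong-class means into one symbol, so a fully rigorous bias argument needs either the assumption that points mis-assigned to class $k$ all come from a single alternative class, or an extra sum/maximum over $j\neq k$ of $|\mu_k-\mu_j|$, which the notation suppresses.
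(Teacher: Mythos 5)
Your proposal follows essentially the same route as the paper's own proof: the identical decomposition of $\hat{\theta}_k-\mu_k$ into a Gaussian fluctuation term and a label-noise bias term $(\bar I^k-\mathbb{E}(I^k))(\mu_k-\mu_{\setminus k})$, controlled respectively by a per-coordinate Gaussian tail at level $\delta/2$ (yielding $2\sum_k\sum_j\exp\{-\delta^2\tilde n_k/(8\sigma_j^2)\}$) and by Chebyshev at level $\delta/2$, followed by union bounds over $k$, $j$, and the two failure events. The two issues you flag --- that Chebyshev naturally produces $|\mu_k-\mu_{\setminus k}|^2$ rather than the first power appearing in the statement, and that per-class control at $\delta/2$ does not immediately yield the aggregate bound $\sum_k|\hat\theta_k-\mu_k|\le\delta$ --- are present in the paper's own argument as well, so they do not distinguish your approach from theirs.
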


\begin{proof}
See Appendix \ref{app_sec_proof_theorem1}.
\end{proof}

\paragraph{Interpretation.} 
The theoretical result reveals several interesting aspects. 
First, we show that more number of unlabeled data  $\tilde{n}_k \uparrow$ is useful for a good estimation. We empirically validate this property in Fig. \ref{fig:varying_nolabel_unlabel}. Second, less number of classes $K \downarrow$ and/or less number of input dimensions $d \downarrow$ will make the estimation easier.
Finally, our analysis takes a step further to show that both aleatoric  uncertainty $\sigma_j^2 \uparrow$ and epistemic uncertainty $\var(I^k) \uparrow$ can reduce the probability of obtaining a good estimation. In other words, less uncertainty is more helpful.  To the best of our knowledge, we are \textit{the first} to theoretically characterize the above properties in pseudo-labeling. The theoretical result in \citet{yang2020rethinking} is only applicable for binary classification and the role of uncertainty has not been studied.

\subsection{Identifying High-Confidence Samples for Labeling}

Machine learning predictions vary with different choices of hyperparameters. 
Under these variations, it is unclear to identify  the most probable class to assign labels for some data points. Not only the predictive values but also the ranking order of these predictive samples vary.
The variations in the prediction can be explained due to (i) the uncertainties of the prediction coming from the noise observations (aleatoric uncertainty) and model knowledge (epistemic uncertainty) \citep{hullermeier2019aleatoric} and (ii) the gap between the highest and second-highest scores is small. These bring a challenging fact that the highest score class can be changed with a different set of hyperparameters. This leads to the confusion for assigning pseudo-labels because (i) the best set of hyperparameter is unknown given limited labeled data and (ii) we consider ensemble learning setting wherein multiple models are used together.
\begin{figure}
\vspace{-4pt}
    \centering
        \includegraphics[width=0.47\textwidth]{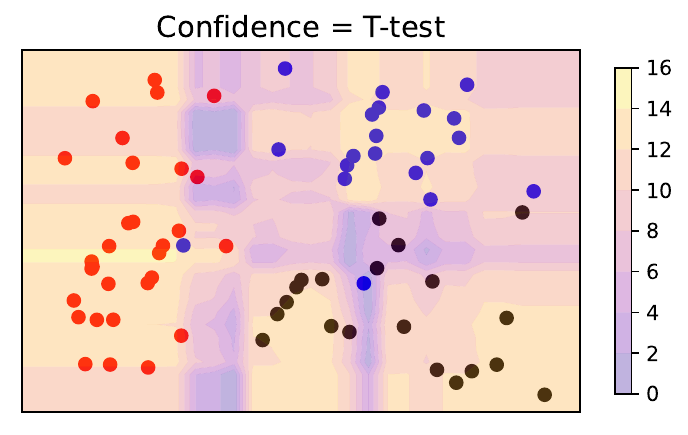}
\vspace{-8pt}
    \caption{T-test for estimating the confidence level on $K=3$ classes. The yellow area indicates high confidence. Samples in the dark area will be excluded (T-test score $\le 2$).} 
    \label{fig:total_var_ttest_examples}
    \vspace{-6pt}
\end{figure}

To address the two challenges above, we are motivated by the theoretical result in Theorem \ref{theorem_classifier_bound} that the data sample noise and uncertainty can worsen the estimated classifier.  Therefore, we select to  assign labels to only samples in which the most probable class is statistically significant than the second-most probable class. \footnote{\citet{rizve2020defense} have considered related idea of using uncertainty into PL without theoretical justification.}
For this purpose, we propose to measure the confidence level by performing Welch's T-test which is the recommended statistical testing \citep{ruxton2006unequal,delacre2017psychologists} when the variances are different between the groups -- the case in our setting.
 

\textbf{Welch's T-test.}
For each data point $i$,  we define two empirical distributions (see Appendix \ref{app_subsec_empirical_distributions}) of predicting the highest $\mathcal{N}(\mu_{i,\diamond	},  \sigma^2_{i,\diamond} )$ and second-highest class $\mathcal{N}(\mu_{i,\oslash	},  \sigma^2_{i,\oslash} )$,\footnote{Denote the highest score class by $\diamond:=\diamond(i)$ and second-highest score class by $\oslash:=\oslash(i)$ for a data point $i$.} estimated from the predictive probability across $M$ classifiers, such as $\mu_{i,\diamond	} = \frac{1}{M} \sum_{m=1}^M p_m(y=\diamond \mid \bx_i) $ and $\mu_{i,\oslash} = \frac{1}{M} \sum_{m=1}^M p_m(y=\oslash \mid \bx_i) $ are the empirical means; $\sigma^2_{i,\diamond}$ and $\sigma^2_{i,\oslash}$ are the variances.
We consider Welch's T-test \citep{welch1947generalization} to compare the statistical significance of two empirical distributions:
\vspace{-5pt}
\begin{align} \label{eq:t_value}
    \textrm{T-value} (\bx_i) = \frac{ \mu_{i,\diamond} - \mu_{i,\oslash}}{ \sqrt{ \big(\sigma^2_{i,\diamond} +\sigma^2_{i,\oslash} \big)/M}}.
\end{align}
\vspace{-10pt}

 
The degree of freedom for the above statistical significance is
$(M-1)(\sigma_1^2 + \sigma_2^2 )^2/(\sigma_1^4 + \sigma_2^4)$. As a standard practice in statistical testing \citep{neyman1933ix}, if p-value is less than or equal to the significance level of $0.05$, the two distributions are different. By calculating the degree of freedom  and looking at the T-value distribution, this significant level of $0.05$ is equivalent to a threshold of $2$  for the T-value.  Therefore, we  get the following rules if the T-value is less than $2$, the two considered classes are from the same distribution --  \textit{eg.}, the sample might fall into the dark area in \textit{right} Fig. \ref{fig:total_var_ttest_examples}. Thus, we exclude a sample from assigning labels when its T-value is less than $2$. 
 The estimation of the T-test above encourages separation between the highest and second-highest classes relates to entropy minimization \citep{grandvalet2004semi} which encourages a classifier to output low entropy predictions on unlabeled data. We visualize the confidence estimated by T-test in Fig. \ref{fig:total_var_ttest_examples}.  




\subsection{Optimal Transport Assignment} 
\vspace{-6pt}
We use the confidence scores to filter out less confident data points and assign labels to the remaining. Particularly, we follow the novel view in \citet{tai2021sinkhorn_sla} to interpret the label assignment process as an optimal transportation problem \citep{villani2009optimal,peyre2019computational,el2021label} between examples and classes, wherein the cost of assigning an example to a
class is dictated by the predictions of the classifier. 

Let us denote $N \le N_u$ be the number of accepted points, \textit{eg.} by T-test, from the unlabeled set. 
Let define an assignment matrix $Q \in \realset^{N\times K}$ of $N$ unlabeled data points to $K$ classes such that we assign $\bx_i$ to a class $k$ if $Q_{i,k}>0$.
We learn an assignment $Q$ that minimizes the total assignment cost $\sum_{ik} Q_{ik} C_{ik}$ where $C_{ik}$ is the cost  of assigning an example $i$ to a class $k$ given by the corresponding negative  probability  as used in \citet{tai2021sinkhorn_sla}, i.e., $C_{ik} := - \log p(y_i=k\mid \bx_i)$ where $0 \le p(y_i =k \mid \bx_i) \le 1$

\begin{minipage}{0.49\textwidth}
\vspace{-10pt}
\begin{align}
    \textrm{minimize}_{Q, \bu, \bv, \tau} \quad & \big \langle Q,C \big \rangle \label{eq:lp_eq0} \\
    \textrm{s.t.} \quad & Q_{ik} \ge 0, \bu \succeq 0, \bv \succeq 0, \tau \ge 0 \label{eq:ot_eq1} \\
    &Q \b1_K + \bu = \b1_N \label{eq:ot_eq2} \\
    &Q^T \b1_N + \bv =  \bw_+  N \label{eq:ot_eq3} \\
    &\bu^T \b1_N + \tau =   N(1 - \rho \bw_-^T \b1_K)  \label{eq:ot_eq4} \\
    &\bv^T \b1_K +\tau = N ( \bw_+^T \b1_K - \rho \bw_-^T \b1_K) \label{eq:ot_eq5}
\end{align}
\end{minipage}

where $\b1_K$ and $\b1_N$ are the vectors one in $K$ and $N$ dimensions, respectively; $\rho\in[0,1]$ is the fraction of assigned label, i.e., $\rho=1$ is full allocation; $\bw_+, \bw_- \in \mathbb{R}^k$ are the vectors of  upper and lower bound assignment per class which can be estimated empirically from the class label frequency in the training data or from prior knowledge.

Our formulation has been motivated and slightly modified from the original SLA \citep{tai2021sinkhorn_sla} to introduce the lower bound $\bw_-$ specifying the minimum number of data points to be assigned in each class. We refer to Appendix \ref{app_sec_effect_lowerbound} for the ablation study on the effect of this lower bound and
Appendix \ref{app_sec_derivation_ot_lp} for the derivations on the equivalence between the inequality for linear programming and equality in Eqs. (\ref{eq:ot_eq1},\ref{eq:ot_eq2},\ref{eq:ot_eq3},\ref{eq:ot_eq4},\ref{eq:ot_eq5}) for optimal transport. 

We define the marginal distributions for row  $\br = [\b1_N^T; N  ( \bw_+^T \b1_k - \rho \bw_-^T \b1_K)]^T \in \realset^{N+1}$ and column $\bc=[  \bw_+  N ; N (1 - \rho \bw_-^T \b1_K)  ]^T \in \realset^{K+1}$. 
We define the prediction matrix over the unlabeled set, which satisfied the confidence test, $P \in \mathbb{R}^{M \times N \times K}$ such that each element $P_{m,i,k}=p_m(y=k \mid \bx_i)$ and the averaging prediction over $M$ models is $\bar{P} = \frac{1}{M} \sum_{m=1}^M P_{m,*,*} \in \realset^{N \times K}$. The cost and the assignment matrices in $\realset^{ (N+1) \times (K+1)}$ are defined below
\begin{minipage}{0.24\textwidth}
\vspace{-10pt}
\begin{align*} 
\label{eq:cost_matrix_}
C := 
\begin{bmatrix}
- \log \bar{P}  & \textbf{0}_N \\
\textbf{0}_K^T & 0 \\
\end{bmatrix}
\end{align*}
\end{minipage}
\begin{minipage}{0.21\textwidth}
\vspace{-10pt}
\begin{equation} 
\label{eq:assignment_matrix}
\tilde{Q} :=
\begin{bmatrix}
Q & \bu \\
\bv^T & \tau
\end{bmatrix}.
\end{equation}
\end{minipage}


We derive in Appendix \ref{app_sec_Sinkhorn_opt} the optimization process to learn $\tilde{Q}$. After estimating $\tilde{Q}$, we get $Q$ from Eq. (\ref{eq:assignment_matrix}) and assign labels to unlabeled data, i.e., where $Q_{i,k}>0$, and repeat the whole pseudo-labeling process for a few iterations as shown in Algorithm \ref{alg:sinkhorn_csa}. The optimization process above can also be estimated using mini-batches \citep{fatras2020learning} if needed.

Instead of performing greedy selection using a threshold $\gamma$ like other PL algorithms, our CSA specifies the frequency of assigned labels including the lower bound $\bw_-$ and upper bound $\bw_+$ per class as well as the fraction of data points $\rho\in(0,1)$ to be assigned. Then, the optimal transport will automatically perform row and column scalings to find the `optimal' assignments such that the selected element, $Q_{i,k}>0$, is among the highest values in the row (within a data point $i$-th) and at the same time receive the highest values in the column (within a class $k$-th). In contrast, existing PL will either look at the highest values in row or column, separately -- but not jointly like ours. Here, the high value refers to high probability $P$ or low-cost $C$ in Eq. (\ref{eq:lp_eq0}). 

\newcommand\mycommfont[1]{\footnotesize\ttfamily\textcolor{blue}{#1}}
\SetCommentSty{mycommfont}

\SetKwInput{KwInput}{Input}                
\SetKwInput{KwOutput}{Output}

\begin{algorithm}
\DontPrintSemicolon
	    \caption{Confident Sinkhorn Allocation (simplified) \label{alg:sinkhorn_csa}}
	    
	\KwInput{lab data $\{ \bX_l, \by_l \}$, unlab data $\bX_u$, Sinkhorn reg $\epsilon>0$, fraction of assigned label $\rho \in [0,1]$} 
	\KwOutput{Allocation matrix $Q \in \realset^{N \times K} $}

 
Estimate label frequency $\bw_{-}$ and $\bw_{+}$ from $\{ \bX_l, \by_l \}$

 \For{$t= 1,...,T$} 	
 {
     Train $M$ models $\theta_{1,...M}$ given limited labeled data $\bX_l, \by_l$ 	
        
        
        Obtain a confidence set $\bX'_U \subset \bX_U$ using T-value from Eq. (\ref{eq:t_value}) where $N := |\bX'_U|$ 
        
    	Define a cost matrix $C = - \log p(y = [1,...,K] \mid \bX'_U)$
    	
    	Set marginal distributions $\br = [\b1_N^T; N  ( \bw_+^T \b1_K - \rho \bw_-^T \b1_K) $ and $\bc=[ \bw_+  N ;  N (1 - \rho \bw_-^T \b1_K)  ]$

      \tcc{Sinkhorn's algorithm}

    Initialize $\ba^{(1)}=\b1_K^T$
    
       \While{$j=1,...,J$ or until converged}
       {
            Update $\bb^{(j+1)}=\frac{\br}{ \exp\left( - C_{i,k} / \epsilon \right)  \ba^{(j)}}$
            
            $\ba^{(j+1)}=\frac{\bc}{ \exp\left( - C_{i,k} / \epsilon \right)^T \bb^{(j+1)}}$
       }
       
       Obtain $Q = \diag( \ba^{(J)}) \exp \left( - C_{i,k} / \epsilon \right) \diag( \bb^{(J)})$ 
       

        Augment the assigned labels to $\{ \bX_l, \by_l \}$ and remove these points from the unlabeled data $\bX_u$.
        
       
        
 }
 \vspace{-1pt} 
\end{algorithm}

\subsection{IPM PAC-Bayes Bound for Ensemble Models} \label{sec_PacBayes}

In pseudo-labeling mechanisms, we first train a classifier on the labeled data, then use it to make predictions on the unlabelled for assigning pseudo-labels. Most of the existing research in PL implicitly assumes the strong generalization that the model performance on the labeled (train set) and unlabeled (test set) are similar. However, this generalization assumption is not always true. Thus, it is important to guarantee the performance of the above process. 


Our setting considers ensembling using multiple classifiers. We derive a PAC-Bayes bound to study the generalization ability  of the proposed algorithm. Existing PAC-Bayes bounds for ensemble models has considered the Kullback-Leibler (KL) divergence \citep{masegosa2020second} which becomes vacuous when using a finite number of models due to the absolute continuity requirement, which is rather concerning given our setting described above. Therefore, we complement and improve the existing result by developing the \textit{first} PAC-Bayes result using Integral Probability Metrics (IPMs) \cite{muller1997integral} for ensemble models by utilizing  a recent IPM PAC-Bayes result \cite{amit2022integral}. 

Let $\Theta$ denote the set of all models and $\mathscr{P}(\Theta)$ be the distributions over $\Theta$. The IPM, for a function class $\mathcal{F}$ mapping from $\Theta \to \mathbb{R}$ is $\gamma_{\mathcal{F}}(p,q) = \sup_{f \in \mathcal{F}} \card{\E_{p}[f] - \E_{q}[f]}$ for $p,q \in \mathscr{P}(\Theta)$. Let $\xi \in \mathscr{P}(\Theta)$ then we define the \textit{major voting} (MV) classifier as
\begin{align}
    \operatorname{MV}_{\xi}(\bx) := \arg \max_{y \in \mathcal{Y}} \mathbb{P}_{\theta \sim \xi} \left[ h_{\theta}(\bx) = y \right].
\end{align}
We borrow notation from \citet{masegosa2020second} and define the \textit{empirical tandem loss} where $h_{\theta}$ corresponds to the individual prediction of model $\theta$:
\begin{align}
    \hat{L}(\theta,\theta',\mathcal{D}_l) := \frac{1}{N_l} \sum_{(\bx,y) \in \mathcal{D}_l} \mathbb{1} \Bigl[h_{\theta}(\bx) = y \Bigr ] \cdot \mathbb{1} \Bigl[h_{\theta'}(\bx) = y\Bigr]. \nonumber
\end{align}
We introduce standard notation in PAC-Bayes such as the generalization gap for a loss $\ell: \Theta \times \mathcal{X} \times \mathcal{Y} \to [0,1]$: $\Delta_{\mathcal{D}}(\theta) := \E_{(\bx,y) \sim P}[\ell(\theta,\bx,y)] -  \E_{(\bx,y) \in \mathcal{D}}[\ell(\theta,\bx,y)]$, where $P$ is the population distribution. We are now ready to present the main Theorem when $\ell$ is taken to the $0-1$ loss.
\begin{theorem}\label{main:pac-bayes}
Let $\mathcal{F}$ be a set of bounded and measurable functions. For any fixed labeled dataset $\mathcal{D}_l$ of size $N_l$, suppose $2(N_l-1)\Delta_{\mathcal{D}_l}^2 \in \mathcal{F}$ then for any $\delta \in (0,1)$, prior $\pi \in \mathscr{P}(\Theta)$ and $\xi \in \mathscr{P}(\Theta)$ it holds
\begin{align}
\tiny
    \E_{(\bx,y) \sim P}
    \left[ \mathbb{1} \Bigr[ \operatorname{MV}_{\xi}(\bx) = y 
    \Bigl] 
    \right]
    \leq  
     \E_{\xi}\left[ \frac{4}{N_l}\sum_{(\bx,y) \in \mathcal{D}_l}\mathbb{1} \Bigl[h_{\theta}(\bx) = y \Bigr ]    \right] \nonumber \\ 
     + \sqrt{\frac{8\gamma_{\mathcal{F}}(\xi,\pi) + 8\ln(N_l/\delta) }{N_l-1}} 
 \end{align}
with probability at least $1 - \delta$.
\end{theorem}
The above result allows us to reason about the generalization performance of the ensemble classifier trained on labeled set and used to predict on unlabelled set. Particularly, we derive the result using the IPM, instead of KL divergence in existing work \cite{masegosa2020second}. Note, as an extreme example, we can select $\mathcal{F}$ to be the set of all measurable and bounded functions into $[-1,1]$ and $\gamma_{\mathcal{F}}(p,q)=\operatorname{TV}(p,q)$ becomes the Total Variation (TV) distance. In this case, a simple application of Pinsker's inequality \cite{csiszar2011information}, which states that $\gamma_{\mathcal{F}}(p,q) \leq \sqrt{0.5\operatorname{KL}(p,q)}$, recovers a bound similar to, but tighter than, that of \citet{masegosa2020second} by a square root factor, see Eq. (4) in Theorem 8 of \citet{masegosa2020second}. Therefore, the result we present is tighter and much more general since $\mathcal{F}$ can be chosen to be much smaller, as long as the assumptions are met. Our PAC-Bayes result also provides a useful theoretical analysis which may be of interest from the wider machine learning community using ensemble models. 

Our generalization result in Theorem \ref{main:pac-bayes} is not only consistently aligned with the theoretical result in Theorem \ref{theorem_classifier_bound} (studied from a different perspective), but also extends to the ensemble settings. Thus, both theoretical results complement and strengthen each other.
 
 

\begin{table*}
\vspace{-5pt}
\caption{Comparison with pseudo-labeling methods on classification problem.  \label{table:compare_semi_supervisedlearning}}
\vspace{-13pt}


\begin{center}
\begin{tabular}{c | c c c c c c c c c} 

 \multirow{2}{*}{Datasets} & \multicolumn{2}{c}{Supervised Learning} &  \multirow{2}{*}{Vime} &  \multirow{2}{*}{PL} & \multirow{2}{*}{Flex}    & \multirow{2}{*}{UPS} & \multirow{2}{*}{SLA} &\multirow{2}{*}{CSA}    \\ 
 \cline{2-3}
& XGBoost & MLP\cr
 \hline\hline
 
Segment &\small  $95.42 \pm1$ &\small $94.63\pm1$&\small $92.71\pm1$ &\small $95.68\pm1$ &\small $95.68\pm1$ &\small $95.67\pm1$ &\small $95.80\pm1$ &\small $\textbf{95.90}\pm1$  \\ 
 \rowcolor{lightgray}
Wdbc &\small $89.20\pm3$ &\small $91.33\pm2$&\small $\textbf{91.83}\pm5$ &\small $91.23\pm3$  &\small $91.23\pm3$&\small $91.62\pm3$&\small $90.61\pm2$&\small   $\textbf{91.83}\pm3$   \\
 Analcatdata &\small $91.63\pm2$ &\small$96.17\pm1$ &\small  $96.13\pm2$ &\small  $90.95\pm2$ &\small $90.62\pm3$&\small $91.33\pm3$&\small $90.98\pm2$ &\small $\textbf{96.60}\pm2$   \\
  \rowcolor{lightgray}
German-credit &\small $70.73\pm2$&\small $71.10\pm3$&\small  $70.50\pm4$ &\small $70.72\pm3$  &\small $70.72\pm3$ &\small $71.15\pm2$ &\small $70.72\pm3$ &\small  $\textbf{71.47}\pm3$  \\
Madelon &\small $54.80\pm3$&\small $50.80\pm2$ &\small $52.97\pm2$ &\small  $56.45\pm4$ &\small $56.74\pm4$ &\small $57.13\pm3$ &\small $55.13\pm3$ &\small $\textbf{57.51}\pm3$  \\
   \rowcolor{lightgray}

Dna &\small $88.53\pm1$ &\small $76.50\pm2$ &\small $79.07\pm3$ &\small $88.17\pm1$ &\small $88.17\pm1$ &\small $88.51\pm1$ &\small $88.09\pm2$ &\small $\textbf{89.24}\pm1$ \\

Agar Lepiota &\small $57.63\pm1$ &\small $63.80\pm1$  &\small $\textbf{63.83}\pm1$  &\small $58.98\pm1$ &\small $59.53\pm1$ &\small $58.88\pm1$ &\small $58.96\pm1$ &\small $59.53\pm1$  \\
   \rowcolor{lightgray}
Breast cancer &\small $93.20\pm2$ &\small  $86.40\pm6$ &\small  $85.87\pm5$ &\small $92.89\pm2$ &\small $92.89\pm2$ &\small $93.38\pm2$ &\small $92.76\pm2$ &\small $\textbf{93.55}\pm2$  \\
 Digits &\small $82.23\pm3$&\small $86.80\pm1$&\small  $84.10\pm1$ &\small $81.67\pm3$&\small $81.44\pm3$ &\small $83.78\pm3$ &\small $81.51\pm3$ &\small $\textbf{88.10}\pm2$ 

   
\end{tabular}
\end{center}
\vspace{-10pt}

\end{table*}


\section{Experiments} \label{sec:experiment}

\subsection{Pseudo-labeling for  classification}

\textbf{Setting.} Given the vast literature in semi-supervised learning, we select to compare with the following pseudo-labeling methods: Pseudo-labeling (PL) \citep{lee2013pseudo}, FlexMatch \citep{zhang2021flexmatch}, UPS \citep{rizve2020defense} and   SLA \citep{tai2021sinkhorn_sla}. We adapt the label assignment mechanisms in \citet{zhang2021flexmatch,rizve2020defense}  for tabular domains although their original designs are  for computer vision tasks. Without explicitly stated, we use a pre-defined threshold of $0.8$ for PL, FlexMatch, UPS. We further vary this threshold with different values in Sec. \ref{app_sec_comparison_different_threshold}. We also compare with the supervised learning method -- trained using only the labeled data. The multi-layer perceptron (MLP) implemented in \citet{yoon2020vime} includes two layers using $100$ hidden units.
Additional to these pseudo-labeling based approaches, we compare the proposed method with Vime.

\textbf{Data.} We use public datasets for the classification task from UCI repository \citep{asuncion2007uci}. Without using domain assumption, these data are presented as vectorized or tabular data formats. We also use Cifar10 and Cifar100 for the experiments with deep learning model.  We summarize all dataset statistics in Appendix Table \ref{table:dataset}. Our data covers various domains including: image, language, medical, biology, audio, finance and retail. 

\textbf{Classifier and hyperparameters.}
Given limited labeled data presented in tabular format, we choose XGBoost~\citep{chen2016xgboost} as the main classifier which typically outperforms state-of-the-art deep learning approaches  in this setting \citep{shwartz2022tabular} although our method is more general and not restricted to XGBoost.  
We use $M=20$ XGBoost models for ensembling. We refer to Appendix \ref{app_section_varyingM} on the empirical analysis with different choices of $M$. The ranges for these hyperparameters are summarized in Appendix Table \ref{tab:xgb_hypers}. We repeat the experiments $30$ times with different random seeds.

\textbf{Result.} We use accuracy as the main metric for evaluating the classification problem \citep{wu2017unified}. We first compare our CSA with the methods in pseudo-labeling family. We show the performance achieved on the held-out test set as we iteratively assign pseudo-label to the unlabeled samples, i.e., varying $t=1,...,T$. As presented in Fig. \ref{fig:ssl_pseudo_labeling_per_iterations} and Appendix Fig. \ref{fig:appendix_experiments_pl_per_iterations}, CSA significantly outperforms the baselines by a wide margin in the datasets of Analcatdata, Synthetic Control and Digits. CSA improves approximately $6\%$ compared with fully supervised learning. CSA gains $2\%$ in Madelon dataset and in the range of $[0.4\%,0.8\%]$ on other datasets.

Unlike other pseudo-labeling methods (PL, FlexMatch, UPS), CSA can get rid of the requirement to predefine the suitable threshold $\gamma\in [0,1]$, which is unknown in advance and should vary with different datasets. In addition, the selection in CSA is non-greedy that considers the relative importance within and across rows and columns by the Sinkhorn's algorithm while the existing PL methods can be greedy when only compared against a predefined threshold.

\begin{figure}
\vspace{-8pt}
    \centering
    \includegraphics[width=0.4\textwidth]{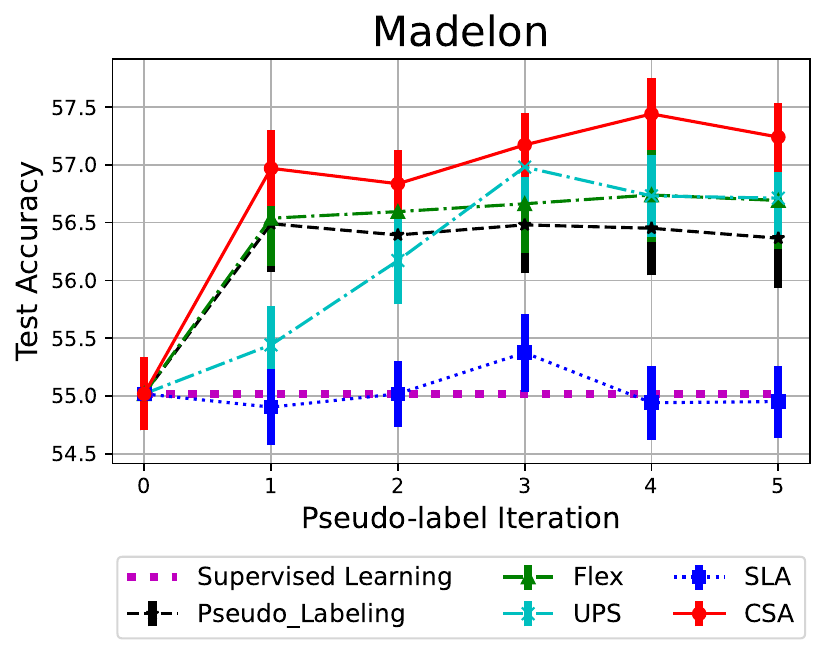}

    \vspace{-5pt}    
    \caption{Comparison with PL methods}
    \label{fig:ssl_pseudo_labeling_per_iterations}
    \vspace{-17pt}
\end{figure}

Furthermore, our CSA outperforms SLA \citep{tai2021sinkhorn_sla}, which does not take into consideration the confidence score. Thus, SLA may assign labels to `uncertain' samples which can degrade the performance. From the comparison with SLA, we have shown that  uncertainty estimation provides us a tool to identify susceptible points to be excluded and thus can be beneficial for SSL.

We then compare our CSA with Vime \citep{yoon2020vime}, the current state-of-the-art methods in semi-supervised learning for tabular data in Table \ref{table:compare_semi_supervisedlearning}. CSA performs better than Vime in most cases, except the Agaricus Lepiota dataset which has an adequately large size ($6500$ samples, see Table \ref{table:dataset}) -- thus the deep network component in Vime can successfully learn and perform well.


\subsection{Pseudo-labeling with deep learning model}
We further integrate our CSA strategy into a  deep learning model for enabling large-scale experiments. In particular, we follow the same settings in \citet{rizve2020defense} on Cifar10 for two different labeled set sizes ($1000$ and $4000$ labels) as well as on CIFAR100 (with $4000$ and $10,000$ labels).  The result is reported over $5$ independent runs. 

\begin{table}
\vspace{-5pt}
\centering
\caption{Classification accuracy on Cifar datasets.} \label{table:cf10_comparison}
\vspace{-8pt}

\begin{adjustbox}{width=0.48\textwidth,center}

\begin{tabular}{  c | c c | c c }

\multirow{2}{*}{Methods}
& \multicolumn{2}{c|}{Cifar10} & \multicolumn{2}{c}{Cifar100} \\
& 1000 labels & 4000 labels & 4000 labels & 10,000 labels \\ 
\hline
\hline
\rowcolor{lightgray} SL    &  $72.34 \pm2.1$ & $83.35 \pm2.3$ & $48.51 \pm3.8$ & $60.29 \pm3.3$  \\
PL  & $77.40 \pm1.7$ & $87.06 \pm1.9$ & $57.14 \pm2.9$ & $66.53 \pm2.5$ \\
\rowcolor{lightgray} Flex & $90.22 \pm1.6$ & $91.89 \pm2.4$ & $56.54 \pm2.1$ & $67.62 \pm1.3$ \\
SLA & $91.44 \pm1.2$ & $93.87 \pm1.3$ & $56.92 \pm2.3$ & $68.03 \pm1.0$ \\
\rowcolor{lightgray} UPS & $91.86 \pm0.7$ & $93.64  \pm0.7$ & $\textbf{59.23} \pm1.2$ & $67.89 \pm1.1$ \\
CSA & $\textbf{92.38}\pm0.8$ & $\textbf{94.21} \pm0.7$ & $58.62 \pm1.3$ &  $\textbf{68.79} \pm0.9$ \\
\end{tabular}
\end{adjustbox}

\vspace{-16pt}
\end{table}

We follow the official implementation of UPS\footnote{\url{https://github.com/nayeemrizve/ups}} to make a fair comparison. All models use the same backbone networks (Wide ResNet 28-2
\citep{zagoruyko2016wide}).  Both CSA and UPS use the same MC-Dropout \citep{gal2016dropout} to estimate the uncertainty. But then CSA uses the T-test to filter out the samples.
The result in Table \ref{table:cf10_comparison} is consistent with the result in Table 5 of \citet{rizve2020defense}. 
Note that the implementation in \citet{rizve2020defense} does not take into account the data augmentation step. Thus, the result of SLA is slightly lower than in  \citet{tai2021sinkhorn_sla}. It should be noted that our focus in this comparison is the effectiveness of pseudo-labeling strategies, not the data augmentation strategies for computer vision.



\subsection{Comparison against PL with different thresholds \label{app_sec_comparison_different_threshold}}
One of the key advantages of using OT is to learn the non-greedy assignment by performing row-scaling and column-scaling to get the best assignment such that the cost $C$ is minimized, or the likelihood is maximized with respect to the given constraints from the row $\br$ and column $\bc$ marginal distributions. By doing this, our CSA can get rid of the necessity to define the threshold $\gamma$ and perform greedy selection in most of the existing pseudo-labeling methods.

\begin{figure*}
\vspace{-2pt}
    \centering
    \includegraphics[width=0.55\textwidth]{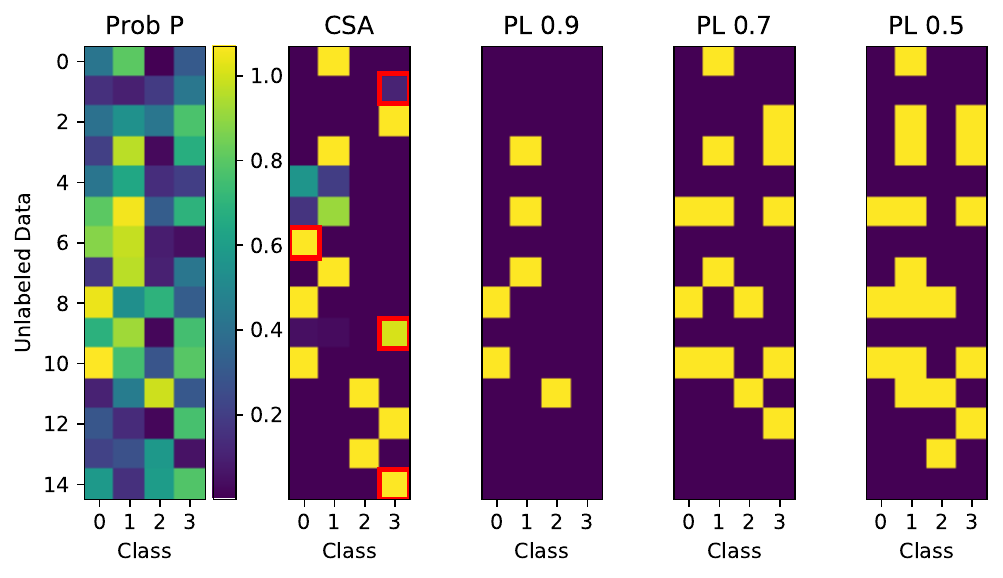}
        \includegraphics[width=0.40\textwidth]{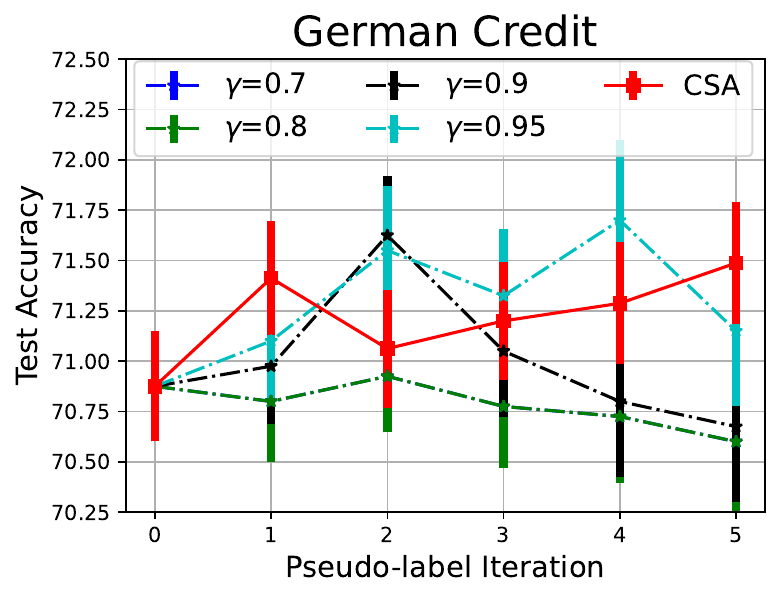}            
\vspace{-10pt}
    \caption{Comparison between CSA versus  pseudo-labeling with different thresholds $\gamma$. \textit{Left}: the assignments by CSA can not be simply achieved by varying the threshold in PL, \textit{eg.}  see the locations highlighted in red square. \textit{Right}: Comparison when varying thresholds in PL against CSA. }
    \label{fig:viz_csa_pl_different_thresholds}
    \vspace{-7pt}
\end{figure*}

We may be wondering if the optimal transport assignment $Q$ can be achieved by simply varying the thresholds $\gamma$ in pseudo-labeling methods? The answer is no. To back up our claim, we perform two analyses. First, we visualize and compare the assignment matrix achieved by CSA and by varying a threshold $\gamma \in \{0.5, 0.7, 0.9\}$ in PL. As shown in \textit{left} Fig. \ref{fig:viz_csa_pl_different_thresholds}, the outputs are different and there are some assignments which could not be selected by PL with varying $\gamma$, \textit{eg.}, the following $\{$row index, column index$\}$: $\{1,3\}$, $\{9,3\}$ and $\{14,3\}$ annotated in red square in \textit{left} Fig. \ref{fig:viz_csa_pl_different_thresholds}.

Second, we empirically compare the performance of CSA against varying $\gamma \in \{0.7, 0.8, 0.9, 0.95\} $ in PL. We present the results in \textit{right} Fig. \ref{fig:viz_csa_pl_different_thresholds}, which again validates our claim that the optimal assignment in CSA will lead to better performance consistently than PL with changing values of $\gamma$. 


    
    

\subsection{Ablation Studies \label{sec:ablation_study}}

\textbf{Varying the number of labels and unlabels.}
Different approaches exhibit substantially different levels of sensitivity to the amount of labeled and unlabeled data, as mentioned in \citet{oliver2018realistic}.
We, therefore, validate our claim in Theorem \ref{theorem_classifier_bound} by empirically demonstrating the model performance with increasing the number of unlabeled examples. In \textit{left} Fig. \ref{fig:varying_nolabel_unlabel}, we show that the model will be beneficial with increasing the number of unlabeled data points.
\begin{figure}
\vspace{-8pt}
    \centering
 \includegraphics[width=0.4\textwidth]{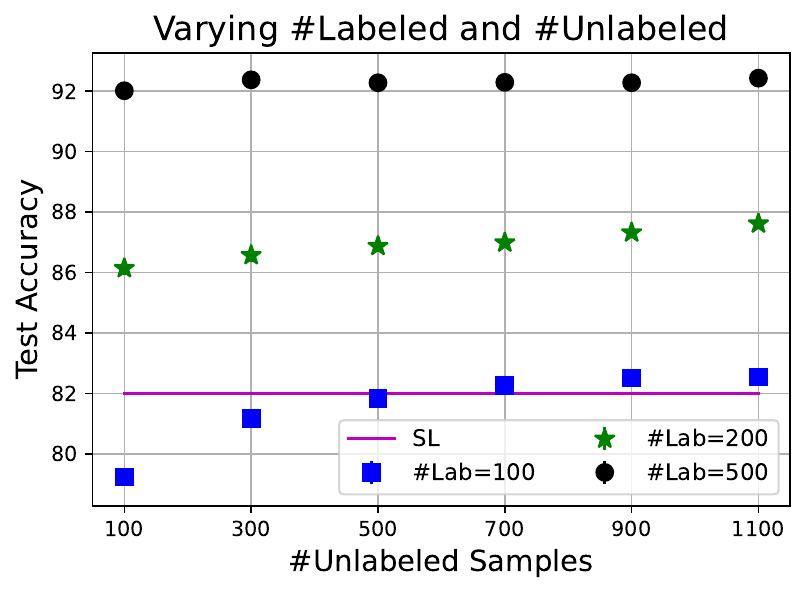}
 \vspace{-10pt}
    \caption{Performance on Digit w.r.t. the number of unlabeled samples given the number of labeled samples as $100,200,500$, respectively.}
    \label{fig:varying_nolabel_unlabel}
    \vspace{-10pt}
\end{figure}

\textbf{Limitation.} In some situations, such as when labeled data is too small in size or contains outliers, our CSA will likely assign incorrect labels to the unlabeled points at the earlier iteration. This erroneous will be accumulated further and lead to poor performance, as shown in \textit{left} Fig. \ref{fig:varying_nolabel_unlabel} when the number of labels is $100$ and the number of unlabels is less than $500$. We note, however, that other pseudo-labeling methods will share the same limitation.


\textbf{Further analysis.}
We refer to Appendix \ref{app_additional_ablation} for other empirical analysis  which can be sensitive to the performance: (i) different choices of PL iterations $T$ and (ii) different number of choices for XGB models $M$, (iii) computational time for each component and (iv) statistics on the number of points selected by each component per iteration and (v) effect of ensembling toward the final performance.


\section{Conclusion and Future Work} \label{sec:conclusion}

We have presented new theoretical results for pseudo-labeling, characterizing several interesting aspects to obtain good performance, such as the number of unlabels, the number of classes and in particular the uncertainty. Additionally, we derive the first PAC-Bayes generalization bound for ensemble models using IPM which extends and complements the existing result with KL divergence.

Based on the theoretical finding on the role of uncertainty, we have proposed the CSA which estimates and takes into account the confidence levels in assigning labels. The label assignment strategy is estimated using Sinkhorn algorithm which appropriately scales rows and columns probability to get assignment without the greedy selection using a threshold as used in most of PL methods.

The proposed CSA maintains the benefits of PL in its simplicity, generality, and ease of implementation while CSA can significantly improve PL performance by addressing the overconfidence issue and better assignment with OT. Our CSA can also be integrated into training deep learning model for large-scale experiments.



Future work can extend the CSA beyond semi-supervised learning tasks, such as to active learning or learning with noisy labels where pseudo-labeling is particularly useful.



\section*{Impact Statement}
This paper presents work which aims to advance the field of Machine Learning. In particular, the paper studies theoretical analysis and proposes a method that improve the existing pseudo-labeling problem for semi-supervised learning.  

\bibliographystyle{icml2024}

\bibliography{vunguyen}

\newpage
\appendix
\newpage

\newpage
\appendix

\onecolumn

\section{Appendix: Theoretical Results}
We derive the proof for Theorem 1 in Sec. \ref{app_sec_proof_theorem1}, interpret the linear programing inequality to optimal transport in Sec. \ref{app_sec_derivation_ot_lp}, present the derivations for the Sinkhorn algorithm used in our method in Sec. \ref{app_sec_Sinkhorn_opt}, the proof for Theorem \ref{main:pac-bayes} in Sec. \ref{app_proof_pac-bayes} and consider another criterion for confident estimation in Sec. \ref{app_total_entropy_consideration}.

\begin{table}
\caption{Notations. We use column vectors in all notations.\label{table:notation}}
\begin{centering}
\scalebox{0.99}{

\begin{tabular}{ccc}
\toprule 
\textbf{Variable} & \textbf{Domain} & \textbf{Meaning} \tabularnewline
\midrule 
$K$ & $ \mathcal{N}$ & number of classes\tabularnewline
\midrule 
$M$ & $ \mathcal{N}$ & number of XGBoost models for ensembling\tabularnewline
\midrule 
$d$ & $ \mathcal{N}$ & feature dimension\tabularnewline
\midrule 
$\bx_i$ & $\realset^d$ & a data sample and a collection of them $\bX = [\bx_1,...,\bX_N] \in \realset^{N\times d}$ \tabularnewline
\midrule 
$y_i$ & $\{1,2,...,K\}$ & a data point label and a collection of labels $\bY = [y_1,...y_N] \in \realset^{N}$ \tabularnewline
\midrule 
$\mathcal{D}_l, \mathcal{D}_u$ & sets & a collection of labeled  $\{ \bx_i, y_i \}_{i=1}^{N_l}$ or unlabeled data $\{ \bx_i\}_{i=N_l +1 }^{N_u+N_l}$ \tabularnewline
\midrule 
$\b1_K$, $\b1_N$ & $[1,...,1]^T$ & a vector of ones in $K$ (or $N$) dimensions \tabularnewline

\midrule 
$\rho$ & $[0,1]$ & allocation fraction, $\rho=0$ is no allocation or $\rho=1$ is full allocation \tabularnewline

\midrule 
$\epsilon$ & $\realset^+$ & Sinkhorn regularization parameter \tabularnewline

\midrule 
$Q$ & $\realset^{ N \times K}$ & assignment matrix, i.e., $Q_{i,k}>0$ assigns a label $k$ to a data point $i$ \tabularnewline

\midrule 
$C$ & $\realset^{ N \times K}$ & a cost matrix estimated from the probability matrix i.e., $C=1-P$ \tabularnewline

\midrule 
$\bw, \bw_+, \bw_-$ & vector $[0,1]^K$ & label frequency, upper bound and lower bound of label frequency \tabularnewline
\midrule 
$\gamma$ & $[0,1]$& a threshold for assigning labels used in pseudo-labeling methods \tabularnewline

\bottomrule
\end{tabular}}
\par\end{centering}

\end{table}

\subsection{Proof of Theorem \ref{theorem_classifier_bound} in the main paper} \label{app_sec_proof_theorem1}

We consider a multi-classification problem with the generating feature $\bx \mid y = k \stackrel{\textrm{iid}}{\sim}  \mathcal{N}(\mu_k, \Lambda )$ 
where $\mu_k \in \realset^d$ is the means of the $k$-th classes, $\Lambda$ is the covariance matrix defining the variation of the data noisiness across input dimensions.  
For simplicity in the analysis, we define $\Lambda = \diag([\sigma_1^2,....\sigma_d^2])$ as a $\realset^{d \times d}$ diagonal matrix which does not change with different classes $k$.

Let $\{ \tilde{X}^k_i \}_{i=1}^{\tilde{n}_k}$ be the set of unlabeled data whose pseudo-label is $k$, respectively. Let $\{ I_i^k\}_{i=1}^{\tilde{n}_k}$ be the binary indicator of correct assignments,
such as if $I^k_i=1$, then $\tilde{X}_i^k \sim \mathcal{N}(\mu_k,\Lambda)$. 


We assume the binary indicators $I_i^k$ to be generated from a pseudo-labeling classifier with the corresponding expectations $\mathbb{E}( I^k)$ and variances of $\var( I^k)$ for the generating processes.

In uncertainty estimation literature \cite{der2009aleatory,hullermeier2019aleatoric}, $\Lambda$ is called the \textit{aleatoric uncertainty} (from the observations) and $\var( I^+),\var( I^-)$ are the \textit{epistemic uncertainty} (from the model knowledge).

Instead of using linear classifier in the binary case as in \citet{yang2020rethinking}, here we consider a set of probabilistic classifiers: $f_k(\bx_i) := \mathcal{N} (\bx_i | \hat{\btheta}_k, \Lambda ), \forall k=\{1,...,K\}$ to classify $y_i = k$ such that $k = \arg \max_{j \in {[1,...,K]} } f_j(\bx_i)$. 
Given the above setup,  it is natural to construct our estimate as 
$\hat{\theta}_k =  \frac{\sum_{i=1}^{\tilde{n}_k} \tilde{X}^k_i} { \tilde{n}_k} $ via the extra unlabeled data. We aim to characterize the estimation error $\left| \sum_{k=1}^K \hat{\theta}_k - \mu_k  \right|$.

\begin{theorem} [restated Theorem \ref{theorem_classifier_bound}]
For $\delta>0$, our estimate $\hat{\theta}_k$ for the above pseudo-label classification satisfies
\begin{align}
     \sum_{k=1}^K \left| \hat{\theta}_k - \mu_k  \right| \le \delta
\end{align}
with a probability at least  $1- 2 \sum_{k=1}^K \sum_{j=1}^d  \exp \Big\{  
    -\frac{ \delta^2 \tilde{n}_k }{8 \sigma_j^2  } \Big\}
    - \sum_{k=1}^K \frac{4 \var(I^k)}{ \delta^2} \left| \mu_k - \mu_{\setminus k} \right|$ where $\setminus k := \{1,...,K\} \setminus {k}$ denotes excluding the index $k$.
\end{theorem}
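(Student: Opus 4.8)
The plan is to write the estimation error $\hat\theta_k - \mu_k$ as a sum of a zero-mean Gaussian fluctuation and a bias term caused entirely by the unlabeled points that were pseudo-labeled incorrectly, and then bound the two pieces separately — the fluctuation by Gaussian tail bounds applied coordinate-by-coordinate, and the bias by a second-moment (Chebyshev/Markov) argument on the mislabeling rate. First I would record that every point carrying pseudo-label $k$ can be written as $\tilde X_i^k = \mu_{c_i} + W_i$ with $W_i \stackrel{\mathrm{iid}}{\sim} \mathcal N(0,\Lambda)$ and $c_i$ its true class, and that by definition of the indicator $I_i^k$ we have $\mu_{c_i} = I_i^k \mu_k + (1 - I_i^k)\mu_{\setminus k}$. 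Averaging over $i = 1,\dots,\tilde n_k$ then gives
\[
 \hat\theta_k - \mu_k \;=\; \bar W^{(k)} + \bigl(1 - \bar I^{k}\bigr)\bigl(\mu_{\setminus k} - \mu_k\bigr),
\]
where $\bar W^{(k)} = \tfrac1{\tilde n_k}\sum_i W_i \sim \mathcal N(0, \Lambda/\tilde n_k)$ and $\bar I^{k} = \tfrac1{\tilde n_k}\sum_i I_i^k$. Conditioning on the indicators keeps $\bar W^{(k)}$ a centered Gaussian with the stated covariance, which is exactly what lets the two error sources be handled independently; by the triangle inequality it then suffices to show $\sum_k |\bar W^{(k)}| \le \delta/2$ and $\sum_k |\mu_{\setminus k} - \mu_k|\,|1 - \bar I^{k}| \le \delta/2$ on a common high-probability event.

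For the Gaussian term, the $j$-th coordinate of $\bar W^{(k)}$ is $\mathcal N(0, \sigma_j^2/\tilde n_k)$, so the standard tail bound $\Pr(|\bar W^{(k)}_j| \ge t) \le 2\exp(-t^2 \tilde n_k/(2\sigma_j^2))$ evaluated at $t = \delta/2$ yields $2\exp(-\delta^2 \tilde n_k/(8\sigma_j^2))$; a union bound over $k \in \{1,\dots,K\}$ and $j \in \{1,\dots,d\}$ reproduces the first term of the claimed failure probability and controls $\sum_k |\bar W^{(k)}|$. For the mislabeling term, the contribution of class $k$ is $|\mu_{\setminus k} - \mu_k|\,|1 - \bar I^{k}|$, which is large only when the empirical misassignment fraction $1 - \bar I^{k}$ deviates; applying Chebyshev's inequality (equivalently Markov to $(1 - \bar I^{k})^2$) with the generative model that posits mean $\mathbb E(I^k)$ and variance $\var(I^k)$ for the indicator bounds $\Pr(|\mu_{\setminus k} - \mu_k|\,|1 - \bar I^{k}| \ge \delta/2)$ by a quantity of the order $\var(I^k)\,|\mu_k - \mu_{\setminus k}|/\delta^2$, matching the second term; a final union bound over these $K$ events together with the $2Kd$ Gaussian events, and passing to the complement, gives the statement.

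The Gaussian estimate in the second paragraph is routine; the delicate step — and the main obstacle — is making the mislabeling-bias part honest. The indicators $I_i^k$ are produced by a classifier fitted on the data itself, so they are not literally independent of the noise terms $W_i$, and the ``condition on the indicators'' move has to be justified, or else the generative model for the $I_i^k$ (fixed $\mathbb E(I^k)$, $\var(I^k)$) must be taken as an assumption, exactly as the statement does. In addition, a misassigned point may come from any of the other $K-1$ classes, so $\mu_{\setminus k}$ is really a conditional/mixture mean, and the notation has to be read as a worst case (or the appropriate average) over the alternative classes; pinning down this quantity and checking that the triangle-inequality bookkeeping still closes at $\delta$ — without leaking extra factors of $K$ or $d$ — is where the genuine care is required.
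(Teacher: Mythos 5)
Your proposal follows essentially the same route as the paper's proof: decompose $\hat\theta_k-\mu_k$ into an averaged Gaussian noise term plus a mislabeling bias proportional to $(\mu_{\setminus k}-\mu_k)$, bound the former coordinate-by-coordinate with a Gaussian tail bound at $t=\delta/2$ and a union bound over $k$ and $j$, bound the latter with Chebyshev on the empirical indicator average, and combine the $K+Kd$ events. The caveats you flag (the generative model for $I_i^k$ taken as an assumption, the centering of the bias term, and the mixture interpretation of $\mu_{\setminus k}$) are exactly the points the paper also handles by assumption rather than by argument, so your write-up is, if anything, more explicit about them.
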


\begin{proof}
Our proof extends the result from \citet{yang2020rethinking} in three folds: (i) generalizing from binary to multi classification; (ii) extending from one dimensional to multi-dimensional input; and (iii) taking into account the uncertainty for pseudo-labeling. Let us denote $k \in {1,...K}$ indicates the correct class index. We denote the incorrect class being $\setminus k := \{1,...,K\} \setminus k$. This refers to one of the remaining index classes in the set ${1,...K}$, but excluding $k$. For ease of reading, we follow the same notation in \citet{yang2020rethinking} to denote the labeled vs unlabeled data set. 

We rewrite $  \tilde{X}^k_i = (1 - I_i^k )(\mu_{\setminus k} - \mu_k) + Z^k_i$ where $Z_i^k \sim \mathcal{N}( \mu_k, \sigma^2) $ and $I^k_i$ is generated from a classifier with the expectation $\mathbb{E}( I^k)$ and variance $\var( I_i^k)$. This means if the pseudo-label is correct, $\tilde{X}^k_i \sim Z^k_i$ and otherwise $\tilde{X}^k_i \sim (\mu_{\setminus k} - \mu_k) + Z^k_i$.  We want to bound the estimation error $ \sum_{k=1}^K \left| \hat{\theta}_k - \mu_k \right|$.  Our estimate $\hat{\theta}_k$ can be written as
\begin{align}
    \hat{\theta}_k &=  \frac{\sum_{i=1}^{\tilde{n}_k} \tilde{X}^k_i} { \tilde{n}_k} 
    =  \frac{\sum_{i=1}^{\tilde{n}_k}  (1 - I_i^k )(\mu_{\setminus k} - \mu_k) + Z^k_i } { \tilde{n}_k}   \\
    &= \left( 
   \textcolor{orange}{  \frac{ \sum_{i=1}^{\tilde{n}_k}  I_i^k }  {\tilde{n}_k} } (\mu_k - \mu_{\setminus k})
    + \textcolor{blue}{\frac{\sum^{\tilde{n}_k}_{i=1} Z^k_i} {\tilde{n}_k } }
    \right) . \label{eq_app_theta_k}
\end{align}
Given $\sum_{k=1}^K \mathbb{E}( I^k) \left[ \mu_k  - \mu_{\setminus k} \right] = 0 $, we write
\begin{align}
 \sum_{k=1}^K \left| \hat{\theta}_k - \mu_k \right| =    \sum_{k=1}^K   \left|\hat{\theta}_k - \mu_k \right|  - \sum_{k=1}^K \mathbb{E}( I^k) \left[ \mu_k  - \mu_{\setminus k} \right] . \label{eq_app_thetak_muk}
\end{align}
We continue by plugging Eq. (\ref{eq_app_theta_k}) into Eq. (\ref{eq_app_thetak_muk}) to obtain
\begin{align}
    \sum_{k=1}^K \left|  \hat{\theta}_k - \mu_k \right|
    =&  
   \sum_{k=1}^K \left|  \textcolor{orange}{   \frac{ \sum_{i=1}^{\tilde{n}_k}  I_i^k }  {\tilde{n}_k} } (\mu_k - \mu_{\setminus k})
    + \textcolor{blue}{  \frac{\sum^{\tilde{n}_k}_{i=1} Z^k_i} {\tilde{n}_k } }
    -  \mu_k \right| - \sum_{k=1}^K \mathbb{E}( I^k) \left[ \mu_k  - \mu_{\setminus k} \right]  \\
    \le &
    \sum_{k=1}^K \left| \textcolor{orange}{ \frac{\sum_{i=1}^{\tilde{n}_k}  I_i^k}{\tilde{n}_k}} - \mathbb{E}( I^k) \right|
    \left| \mu_k - \mu_{\setminus k} \right| + 
    \sum_{k=1}^K \left|  \textcolor{blue}{ \frac{\sum^{\tilde{n}_k}_{i=1} Z^k_i} {\tilde{n}_k } 
    }  - \mu_k \right| .\label{eq_app_step_EIk}
\end{align}
We have used the triangle inequality to obtain Eq. (\ref{eq_app_step_EIk}).
Next we bound each term in \textcolor{orange}{orange} and \textcolor{blue}{blue}  separately. 
 
First, we bound  $\textcolor{orange}{\frac{\sum_{i=1}^{\tilde{n}_k}  I_i^k}{\tilde{n}_k}}$ using Chebyshev's inequality as follows
\begin{align}
    P \left( \Bigl| \textcolor{orange}{ \frac{\sum_{i=1}^{\tilde{n}_k}  I_i^k}{\tilde{n}_k}}-\mathbb{E}( I^k) \Bigr| >t \right) \le \frac{\var(I^k)}{t^2}.
\end{align}
Using the union bound across $K$ classes, we get
\begin{align}
    P \left( \bigcup_{k=\{1...K\}} \left\{ \Bigl| \textcolor{orange}{ \frac{\sum_{i=1}^{\tilde{n}_k}  I_i^k}{\tilde{n}_k}}-\mathbb{E}( I^k) \Bigr| >t \right\} \right)  &  \le \sum_{k=1}^K P \left( \Bigl| \textcolor{orange}{ \frac{\sum_{i=1}^{\tilde{n}_k}  I_i^k}{\tilde{n}_k}}-\mathbb{E}( I^k) \Bigr| >t \right) \\
    & \le \sum_{k=1}^K \frac{\var(I^k)}{t^2}. \label{eq_app_bound_I_k}
\end{align}
Second, we bound the term
\begin{align}
    \textcolor{blue}{ \frac{\sum^{\tilde{n}_k}_{i=1} Z^k_i} {\tilde{n}_k }} \sim \mathcal{N} \left( \mu_k, \frac{\Lambda}{\tilde{n}_k} \right) = \mathcal{N} \left( \mu_k, \frac{\diag[\sigma^2_1,...,\sigma^2_d]}{\tilde{n}_k} \right).
\end{align}
Due to the diagonal Gaussian distribution, we can apply Gaussian concentration to each dimension $j = \{1,...,d\}$ to have that 
\begin{align}
P \left( \Bigl|\textcolor{blue}{ \frac{\sum^{\tilde{n}_k}_{i=1} Z^{k,j}_i} {\tilde{n}_k }}
- \mu_{k,j} \Bigr| >t  \right) \le 2 \exp \Big\{-\frac{t^2 \tilde{n}_k  }{2\sigma_j^2 } \Big\}.
\end{align}
Using union bound, we get 
\begin{align}
P \left( \bigcup_{j=\{1...d\}} \Bigl|\textcolor{blue}{ \frac{\sum^{\tilde{n}_k}_{i=1} Z^{k,j}_i} {\tilde{n}_k }}
- \mu_{k,j} \Bigr| >t  \right) & \le \sum_{j=1}^d P \left( \Bigl|\textcolor{blue}{ \frac{\sum^{\tilde{n}_k}_{i=1} Z^{k,j}_i} {\tilde{n}_k }}
- \mu_{k,j} \Bigr| >t  \right) \\ 
& \le 2 \sum_{j=1}^d  \exp \Big\{-\frac{t^2 \tilde{n}_k  }{2\sigma_j^2 } \Big\} \label{eq_app_bound_z_j}.
\end{align}
We again apply union bound over $\bigcup_{k=\{1...K\}}$ to Eq. (\ref{eq_app_bound_z_j})
\begin{align}
P \left( \bigcup_{k=\{1...K\}} \bigcup_{j=\{1...d\}} \Bigl|\textcolor{blue}{ \frac{\sum^{\tilde{n}_k}_{i=1} Z^{k,j}_i} {\tilde{n}_k }}
- \mu_{k,j} \Bigr| >t  \right) &  \le \sum_{k=1}^K P \left( \bigcup_{j=\{1...d\}} \Bigl|\textcolor{blue}{ \frac{\sum^{\tilde{n}_k}_{i=1} Z^{k,j}_i} {\tilde{n}_k }}
- \mu_{k,j} \Bigr| >t  \right) \\ 
& \le 2 \sum_{k=1}^K \sum_{j=1}^d  \exp \Big\{-\frac{t^2 \tilde{n}_k  }{2\sigma_j^2 } \Big\}.\label{eq_app_bound_Z_kj}
\end{align}
Let $\delta>0$, we consider an event $E$ as follows
\begin{align}
    E = \Bigg\{  \bigcup_{k=\{1...K\}} \Bigl|  \textcolor{orange}{ \frac{\sum_{i=1}^{\tilde{n}_k}  I_i^k}{\tilde{n}_k}} - \mathbb{E}( I^k) \Bigr|
    \left| \mu_k - \mu_{\setminus k} \right|  \le \frac{\delta}{2} 
    \quad  and \quad
     \bigcup_{k=\{1...K\}} \Bigl|  \textcolor{blue}{ \frac{\sum^{\tilde{n}_k}_{i=1} Z^k_i} {\tilde{n}_k } }  - \mu_k
    \Bigr| \le \frac{\delta}{2}
    \Bigg\}. \label{eq_app_event_E}
\end{align}
Using the results from Eqs. (\ref{eq_app_bound_I_k}, \ref{eq_app_bound_Z_kj}) and applying the union bound to Eq. (\ref{eq_app_event_E}), we get
\begin{align}
    P(E) \ge 1- 2 \sum_{k=1}^K \sum_{j=1}^d  \exp \Big\{  
    -\frac{\delta^2 \tilde{n}_k }{8\sigma_j^2 } \Big\}
    - \sum_{k=1}^K \frac{4 \var(I^k)}{ \delta^2}  \left| \mu_k - \mu_{\setminus k} \right|.
\end{align}
Equivalently, we write
\begin{align}
    P \left( \sum_{k=1}^K  \left| \hat{\theta}_k - \mu_k  \right| \le \delta \right) 
    \ge 
    1- 2 \sum_{k=1}^K \sum_{j=1}^d  \exp \Big\{  
    -\frac{ \delta^2 \tilde{n}_k }{8 \sigma_j^2  } \Big\}
    - \sum_{k=1}^K \frac{4 \var(I^k)}{ \delta^2} \left| \mu_k - \mu_{\setminus k} \right|.
\end{align}
\end{proof}

\subsection{Derivation of the Optimal Transport Linear Programming \label{app_sec_derivation_ot_lp}}
We start with the following linear programming inequality.
\begin{align}
\vspace{-10pt}
    \textrm{minimize}_{Q} \quad & \big \langle Q,C \big \rangle \label{app_eq:lp_eq0} \\
    \textrm{s.t.} \quad & Q_{ik} \ge 0  \label{app_eq:lp_eq1} \\
   & Q \b1_K \le \b1_N \label{app_eq:lp_eq2} \\
   & Q^T \b1_N \le  N \bw_{+} \label{app_eq:lp_eq3} \\
   & \b1^T_N Q \b1_K \ge N \rho \bw_{-}^T \b1_K \label{app_eq:lp_eq4} \\
   \nonumber
\end{align}
where $\b1_K$ and $\b1_N$ are the vectors one in $K$ and $N$ dimensions, respectively; $\rho\in[0,1]$ is the fraction of assigned label, i.e., $\rho=1$ is full allocation; $\bw_+, \bw_- \in \mathbb{R}^k$ are the vectors of  upper and lower bound assignment per class which can be estimated empirically from the class label frequency in the training data or from prior knowledge. 

Specifically, the row marginal $\b1_N$ is a vector of one, i.e., each data point should not be assigned to more than one label, Eq. (\ref{app_eq:lp_eq2}). The column marginal is in the range of $[N \bw_{-}, N \bw_{+}]$. Each column label should receive the amount of assignment ranging from lower bound and upper bound label frequency $[N\bw_{-}, N \bw_{+}]$, Eq. (\ref{app_eq:lp_eq3},\ref{app_eq:lp_eq4}).

We then transform these above inequalities  to optimal transport setup.
As used in \citet{tai2021sinkhorn_sla}, we introduce non-negative slack variables $u$, $v$ and $\tau$ to replace the inequality constraints on the marginal distributions. We yield the following optimization problem:
\begin{align}
    \textrm{minimize}_{Q, \bu, \bv, \tau} \quad & \big \langle Q,C \big \rangle \\
    \textrm{s.t.} \quad & Q_{ik} \ge 0, \bu \succeq 0, \bv \succeq 0, \tau \ge 0 \label{app_eq:ot_eq1} \\
    &Q \b1_K + \bu = \b1_N \label{app_eq:ot_eq2} \\
    &Q^T \b1_N + \bv =  \bw_+  N \label{app_eq:ot_eq3} \\
   & \b1^T_N Q \b1_K = \tau + N \rho \bw_{-}^T \b1_K. \label{app_eq:ot_eq4} 
\end{align}
We substitute Eq. (\ref{app_eq:lp_eq4}) into Eq. (\ref{app_eq:ot_eq2}) to have:
\begin{align}
\bu^T \b1_N + \tau =   N(1 - \rho \bw_-^T \b1_K).
\end{align}
Similarly, we substitute Eq. (\ref{app_eq:lp_eq4}) into Eq. (\ref{app_eq:ot_eq3}) to obtain:
\begin{align}
\bv^T \b1_K +\tau = N ( \bw_+^T \b1_K - \rho \bw_-^T \b1_K).
\end{align}
We recognize the above equations as an optimal transport problem \cite{cuturi2013sinkhorn,tai2021sinkhorn_sla} with row marginal distribution  $\br = [\b1_N^T; N  ( \bw_+^T \b1_K - \rho \bw_-^T \b1_K)]^T \in \realset^{N+1}$ and column marginal distribution $\bc=[  \bw_+  N ; N (1 - \rho \bw_-^T \b1_K)  ]^T \in \realset^{K+1}$. 

\subsection{Sinkhorn Optimization Step \label{app_sec_Sinkhorn_opt}}
Given the cost matrix $C \in \realset^{N\times K}$, our original objective function is
\begin{align}
\textrm{minimize}_{Q \in \realset^{N\times K}}  \langle Q,C \big \rangle
\end{align}
where $\langle Q,C \big \rangle := \sum_{ik} Q_{ik}C_{ik}$. For a general matrix $C$, the worst case complexity of computing that optimum scales in $\mathcal{O} \left(N^3 \log N \right)$ \cite{cuturi2013sinkhorn}. To overcome this expensive computation, we can utilize the entropic regularization to reduce the complexity to $\mathcal{O} \left(\frac{N^2}{\epsilon^2} \right)$ \cite{dvurechensky2018computational} by writing the objective function as:
\begin{align}
    \mathcal{L}(Q) = \textrm{minimize}_{Q \in \realset^{N\times K}} \langle Q,C \big \rangle - \epsilon \mathbb{H}(Q)
\end{align}
where $\mathbb{H}(Q)$ is the entropy of $Q$. This is a strictly convex optimization problem \cite{cuturi2013sinkhorn} and has a unique optimal solution.

Since the solution $Q^*$ of the above problem is surely nonnegative, i.e., $Q^*_{i,k} >0$, we can thus ignore the positivity constraints when introducing dual variables $\lambda^{(1)} \in \mathbb{R}^N, \lambda^{(2)} \in \mathbb{R}^K$ for each marginal constraint as follows:
\begin{align}
    \mathcal{L} \bigl(Q,\lambda^{(1)},\lambda^{(2)} \bigr) = \langle Q,C \big \rangle + \epsilon \mathbb{H}(Q) + \langle \lambda^{(1)}, \br-Q\b1_K \big \rangle + \langle \lambda^{(2)}, \bc-Q^T\b1_N \big \rangle
\end{align}
where $\br$ and $\bc$ are the row and column marginal distributions. We take partial derivatives w.r.t. each variable to solve the above objective. We have,
\begin{align}
    \frac{ \partial \mathcal{L} \bigl(Q,\lambda^{(1)},\lambda^{(2)} \bigr)}{\partial Q_{i,k}} &=C_{i,k} + \epsilon \bigl( \log Q_{i,k} +1  \bigr) - \lambda^{(1)}_i - \lambda^{(2)}_k =0 \\
    Q^*_{i,k} &=   \exp \left\{ \frac{\lambda^{(1)}_i + \lambda^{(2)}_k - C_{i,k}}{\epsilon} +1 \right\} \\
    &= \exp \left( \frac{\lambda^{(1)}_i}{\epsilon} -1/2 \right) 
    \exp \left( -\frac{C_{i,k}}{\epsilon} \right)
    \exp \left( \frac{\lambda^{(2)}_k}{\epsilon}  -1/2 \right).
\end{align}
The factorization of the optimal solution in the above equation can be conveniently rewritten in matrix form as $\diag(\ba) \exp \left( -\frac{C_{i,k}}{\epsilon} \right) \diag(\bb)$ where $\ba$ and $\bb$ satisfy the following non-linear equations corresponding to the constraints
\begin{center}
\vspace{-10pt}
\begin{minipage}[t]{.49\textwidth}
\begin{align}
    \diag(\ba) \exp \left( -\frac{C_{i,k}}{\epsilon} \right) \diag(\bb) \b1_K = \br
\end{align}
\end{minipage}
\begin{minipage}[t]{.49\textwidth}
\begin{align}
    \diag(\bb) \exp \left( -\frac{C_{i,k}}{\epsilon} \right)^T \diag(\ba) \b1_N = \bc.
\end{align}
\end{minipage}
\end{center}
Now, instead of optimizing $\lambda^{(1)}$ and $\lambda^{(2)}$, we alternatively solve for the vectors
\begin{minipage}{0.48\textwidth}
\begin{align}
    \ba &= \exp \left( \frac{\lambda^{(1)}}{\epsilon} -1/2\right) 
\end{align}
\end{minipage}
\begin{minipage}{0.48\textwidth}
\begin{align}
     \bb&= \exp \left( \frac{\lambda^{(2)}}{\epsilon} -1/2\right).
\end{align}
\end{minipage}
The above forms the system of equations with two equations and two unknowns. We solve this system by initializing $\bb^{(0)}=\b1_K^T$ and iteratively updating for multiple iterations $j=1...J$:
\begin{itemize}
    \item Update $\ba^{(j+1)}=\frac{\br}{ \exp \left( - \frac{C_{i,k}}{\epsilon} \right)  \bb^{(j)}}$
    \item Update $\bb^{(j+1)}=\frac{\bc}{ \exp\left( - \frac{C_{i,k}}{\epsilon} \right)^T \ba^{(j+1)}}$
\end{itemize}
The division operator used above between two vectors is to be understood entry-wise.
After $J$ iterations, we estimate the final assignment matrix for pseudo-labeling
\begin{align}
    Q^* = \diag( \ba^{(J)}) \exp \left( - \frac{C_{i,k}}{\epsilon} \right) \diag(\bb^{(J)}).
\end{align}


\subsection{Proof of Theorem \ref{main:pac-bayes}} \label{app_proof_pac-bayes}

We begin by quoting Proposition 4 of \citet{amit2022integral}.
\begin{proposition}[\citet{amit2022integral}] \label{proposition_Amit_etal}
Let $\mathcal{F}$ be a set of bounded and measurable function. For any fixed dataset $\mathcal{D}$ of size $N$, suppose $2(N-1)\Delta_{\mathcal{D}}^2 \in \mathcal{F}$ then for any $\delta \in (0,1)$, prior $\pi \in \mathscr{P}(\Theta)$ and $\xi \in \mathscr{P}(\Theta)$ it holds
\begin{align}
    \Delta_{\mathcal{D}}(\theta) \leq \sqrt{\frac{\gamma_{\mathcal{F}}(\xi,\pi) + \ln(N/\delta)}{2(N-1)}}.
\end{align}
\end{proposition}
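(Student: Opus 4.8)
The plan is to combine a pointwise majority-vote / second-order Markov argument with the IPM PAC-Bayes gap bound of Proposition \ref{proposition_Amit_etal}. Writing $\ell(\theta,\bx,y) = \mathbb{1}[h_\theta(\bx) \neq y]$ for the $0$--$1$ loss (consistent with the definition of $\Delta_{\mathcal{D}}$), the first task is a deterministic, data-free inequality at each point $(\bx,y)$. I would argue that whenever $\operatorname{MV}_\xi(\bx) \neq y$ the correct label must carry $\xi$-mass strictly below $\tfrac12$, since otherwise it would be the unique or tied plurality winner; hence the weighted Gibbs error $g(\bx,y) := \E_{\theta\sim\xi}[\ell(\theta,\bx,y)] = \mathbb{P}_{\theta\sim\xi}[h_\theta(\bx)\neq y]$ exceeds $\tfrac12$. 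This gives $\mathbb{1}[\operatorname{MV}_\xi(\bx)\neq y] \le \mathbb{1}[g(\bx,y) \ge \tfrac12] \le 4\,g(\bx,y)^2$, the last step being the elementary bound $\mathbb{1}[z\ge\tfrac12]\le 4z^2$ for $z\ge 0$. Since $g \in [0,1]$, I would relax the second-order quantity to first order via $g^2 \le g$, obtaining the pointwise estimate $\mathbb{1}[\operatorname{MV}_\xi(\bx)\neq y] \le 4\,\E_{\theta\sim\xi}[\ell(\theta,\bx,y)]$.

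Second, I would take expectation over $(\bx,y)\sim P$ and exchange the two expectations (both integrate against probability measures, so Fubini applies), yielding $\E_{P}[\mathbb{1}[\operatorname{MV}_\xi \neq y]] \le 4\,\E_{\theta\sim\xi}[\E_{P}[\ell(\theta,\bx,y)]]$. I then split each population risk into its empirical counterpart plus the generalization gap, $\E_P[\ell(\theta,\cdot)] = \E_{\mathcal{D}_l}[\ell(\theta,\cdot)] + \Delta_{\mathcal{D}_l}(\theta)$, and average over $\theta\sim\xi$. This isolates the empirical Gibbs risk $\E_\xi[\tfrac{1}{N_l}\sum_{(\bx,y)\in\mathcal{D}_l}\ell(\theta,\bx,y)]$ and the averaged gap $\E_\xi[\Delta_{\mathcal{D}_l}(\theta)]$.

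The third and decisive step is to control $\E_\xi[\Delta_{\mathcal{D}_l}(\theta)]$ by invoking Proposition \ref{proposition_Amit_etal} with $N = N_l$, which applies directly because the hypothesis $2(N_l-1)\Delta_{\mathcal{D}_l}^2 \in \mathcal{F}$ is exactly the proposition's witness-function requirement; with probability at least $1-\delta$ this yields $\E_\xi[\Delta_{\mathcal{D}_l}(\theta)] \le \sqrt{(\gamma_{\mathcal{F}}(\xi,\pi)+\ln(N_l/\delta))/(2(N_l-1))}$. Multiplying through by the factor $4$ and folding constants, using $4\sqrt{(\gamma_{\mathcal{F}}(\xi,\pi)+\ln(N_l/\delta))/(2(N_l-1))} = \sqrt{(8\gamma_{\mathcal{F}}(\xi,\pi)+8\ln(N_l/\delta))/(N_l-1)}$, reproduces the stated bound, with the empirical term appearing as $\E_\xi[\tfrac{4}{N_l}\sum_{(\bx,y)\in\mathcal{D}_l}\mathbb{1}[h_\theta(\bx)\neq y]]$.

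I expect the main obstacle to be the first step, specifically making the majority-vote-to-Gibbs reduction airtight in the multiclass setting: unlike the binary case, the plurality winner need not command an absolute majority, so I must argue carefully that an MV error forces the \emph{correct} class below mass $\tfrac12$, which in turn requires a tie-breaking convention on the $\arg\max$ that does not favour an incorrect label. The only other delicate point is the relaxation $g^2 \le g$, which converts the naturally second-order (tandem-style) quantity into the first-order empirical term quoted in the statement; this is where tightness relative to a genuine second-order bound is deliberately traded away for the simpler first-order form. Everything downstream is a direct substitution into Proposition \ref{proposition_Amit_etal} together with constant bookkeeping.
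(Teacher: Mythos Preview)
Your proposal actually proves Theorem~\ref{main:pac-bayes}, not the stated Proposition, which the paper simply quotes from \citet{amit2022integral} and does not prove. For Theorem~\ref{main:pac-bayes} your argument is correct and takes essentially the same route as the paper: the paper cites \citet{masegosa2020second}'s Theorem~3 for the second-order bound $\mathbb{1}[\operatorname{MV}_\xi\neq y]\le 4\,\E_{\theta,\theta'}[\ell(\theta)\ell(\theta')]$ and then drops one indicator, whereas you derive the equivalent inequality inline via $\mathbb{1}[g\ge\tfrac12]\le 4g^2$ followed by $g^2\le g$ (these are the same step, since $g^2=\E_{\theta,\theta'\sim\xi}[\ell(\theta)\ell(\theta')]$ for independent draws); the remaining decomposition into empirical risk plus $\Delta_{\mathcal{D}_l}$, the invocation of Proposition~\ref{proposition_Amit_etal}, and the constant arithmetic $4\sqrt{x/(2(N_l-1))}=\sqrt{8x/(N_l-1)}$ are identical.
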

We then require Theorem 3 from \citet{masegosa2020second}.
\begin{theorem}[\citet{masegosa2020second}] \label{theorem_Masegosa}
For any choice of $\xi$, we have
\begin{align}
    \E_{(\bx,y) \sim P}\left[ \mathbb{1} \Bigl[\operatorname{MV}_{\xi}(\bx) = y \Bigr] \right] 
    \leq 4 \E_{\xi(\theta)} \E_{\xi(\theta')} \left[ \E_{(\bx,y) \sim P}\left[ \mathbb{1} \Bigl[h_{\theta}(\bx) = y \Bigr ] \cdot \mathbb{1} \Bigl[h_{\theta'}(\bx) = y\Bigr] \right]  \right].
\end{align}
\end{theorem}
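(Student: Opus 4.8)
The plan is to chain the oracle bound of Masegosa (Theorem~\ref{theorem_Masegosa}) with the IPM generalization guarantee of Amit et al.\ (Proposition~\ref{proposition_Amit_etal}), where the decisive trick is to apply the latter not to individual models but to ordered \emph{pairs} of models. First I would invoke Theorem~\ref{theorem_Masegosa} verbatim to reduce the quantity of interest to the population tandem term,
\begin{align}
\E_{(\bx,y)\sim P}\left[\mathbb{1}[\operatorname{MV}_{\xi}(\bx)=y]\right] \le 4\,\E_{\theta\sim\xi}\E_{\theta'\sim\xi}\left[\E_{(\bx,y)\sim P}\left[\mathbb{1}[h_{\theta}(\bx)=y]\,\mathbb{1}[h_{\theta'}(\bx)=y]\right]\right]. \nonumber
\end{align}
The remaining task is purely a generalization argument: bound this population tandem quantity by its empirical counterpart plus a complexity penalty.

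The key move is to regard the pair $(\theta,\theta')\in\Theta\times\Theta$ as a single ``model'' whose loss on $(\bx,y)$ is the tandem indicator $\ell((\theta,\theta'),\bx,y):=\mathbb{1}[h_{\theta}(\bx)=y]\,\mathbb{1}[h_{\theta'}(\bx)=y]\in[0,1]$, carrying the product posterior $\xi\otimes\xi$ and product prior $\pi\otimes\pi$. Under this lift, the deviation $\Delta_{\mathcal{D}_l}(\theta,\theta')$ is exactly the gap between the population tandem loss and the empirical tandem loss $\hat{L}(\theta,\theta',\mathcal{D}_l)$. Reading Proposition~\ref{proposition_Amit_etal} in its standard posterior-averaged PAC-Bayes form (i.e.\ with $\E_{\xi}[\Delta]$ on the left), applied with $N=N_l$ and the hypothesis $2(N_l-1)\Delta_{\mathcal{D}_l}^2\in\mathcal{F}$ interpreted on the product space, gives, with probability at least $1-\delta$,
\begin{align}
\E_{\theta\sim\xi}\E_{\theta'\sim\xi}\left[\E_{P}[\ell]-\hat{L}(\theta,\theta',\mathcal{D}_l)\right] \le \sqrt{\frac{\gamma_{\mathcal{F}}(\xi\otimes\xi,\pi\otimes\pi)+\ln(N_l/\delta)}{2(N_l-1)}}. \nonumber
\end{align}
Here the IPM is between the product measures, which is the object abbreviated $\gamma_{\mathcal{F}}(\xi,\pi)$ in the theorem statement.

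To recover the single-model empirical term on the right-hand side, I would use the elementary pointwise inequality $\mathbb{1}[h_{\theta}(\bx)=y]\,\mathbb{1}[h_{\theta'}(\bx)=y]\le\mathbb{1}[h_{\theta}(\bx)=y]$, which yields $\hat{L}(\theta,\theta',\mathcal{D}_l)\le\frac{1}{N_l}\sum_{(\bx,y)\in\mathcal{D}_l}\mathbb{1}[h_{\theta}(\bx)=y]$ for every $\theta'$; averaging over $\theta'\sim\xi$ then removes the second index entirely. Substituting the empirical bound and the complexity bound back into the Masegosa inequality, and absorbing the prefactor $4$ inside the square root (so that $16/2=8$), reproduces precisely $\E_{\xi}\left[\frac{4}{N_l}\sum_{(\bx,y)\in\mathcal{D}_l}\mathbb{1}[h_{\theta}(\bx)=y]\right]+\sqrt{(8\gamma_{\mathcal{F}}(\xi,\pi)+8\ln(N_l/\delta))/(N_l-1)}$. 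Since Theorem~\ref{theorem_Masegosa} and the indicator inequality are deterministic and only Proposition~\ref{proposition_Amit_etal} is probabilistic, the entire chain holds with probability at least $1-\delta$.

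The main obstacle is the legitimacy of the lift to the product space: I must verify that Proposition~\ref{proposition_Amit_etal} applies verbatim to the paired model, namely that the tandem indicator is a valid $[0,1]$-valued loss, that the boundedness and measurability hypotheses on $\mathcal{F}$ transfer to the product-space deviation $2(N_l-1)\Delta_{\mathcal{D}_l}^2$, and that the IPM that appears is genuinely $\gamma_{\mathcal{F}}(\xi\otimes\xi,\pi\otimes\pi)$. This last point is what fixes the constant: if one insisted on the marginal IPM, one would have to pass through the triangle inequality $\gamma_{\mathcal{F}}(\xi\otimes\xi,\pi\otimes\pi)\le 2\gamma_{\mathcal{F}}(\xi,\pi)$ (via the intermediate measure $\xi\otimes\pi$), which would inflate the constant from $8$ to $16$. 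Matching the stated constant therefore confirms that $\gamma_{\mathcal{F}}(\xi,\pi)$ in the theorem is to be read as the product-measure IPM, and the remaining arithmetic is routine.
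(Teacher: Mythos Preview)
Your proposal does not prove the quoted statement. The statement given is Theorem~\ref{theorem_Masegosa}, the second-order oracle inequality of Masegosa et al., which the paper simply \emph{cites} without proof. Your write-up instead takes that theorem as a black box (``invoke Theorem~\ref{theorem_Masegosa} verbatim'') and argues for Theorem~\ref{main:pac-bayes}, the IPM PAC-Bayes bound for majority voting. If the task was genuinely to establish Theorem~\ref{theorem_Masegosa}, nothing in your argument addresses it; the proof in Masegosa et al.\ is a two-step Markov argument on the event that at least half the $\xi$-mass misclassifies, and squaring produces the factor $4$ together with the tandem form.

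Assuming your intended target was actually Theorem~\ref{main:pac-bayes}, your route still diverges from the paper's in the order of operations. The paper does \emph{not} lift to the product space $\Theta\times\Theta$: it applies the pointwise inequality $\mathbb{1}[h_\theta=y]\,\mathbb{1}[h_{\theta'}=y]\le\mathbb{1}[h_\theta=y]$ immediately at the \emph{population} level, collapsing the tandem term to $4\,\E_{\xi}\E_{P}[\mathbb{1}[h_\theta=y]]$, and only then invokes Proposition~\ref{proposition_Amit_etal} on the single-model space $\Theta$ with the ordinary $0$--$1$ loss. This sidesteps entirely the product-measure issue you wrestle with in your last paragraph: $\gamma_{\mathcal{F}}(\xi,\pi)$ in the theorem is literally the IPM on $\mathscr{P}(\Theta)$, not a shorthand for $\gamma_{\mathcal{F}}(\xi\otimes\xi,\pi\otimes\pi)$, and the constant $8$ arises simply from $4\sqrt{x/(2(N_l-1))}=\sqrt{8x/(N_l-1)}$. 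Your product-space construction is unnecessary here, and your inference that the matching constant ``confirms'' the product-IPM reading is incorrect.
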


\begin{theorem}[Theorem \ref{main:pac-bayes} in the main paper]
Let $\mathcal{F}$ be a set of bounded and measurable functions. For any fixed labeled dataset $\mathcal{D}_l$ of size $N_l$, suppose $2(N_l-1)\Delta_{\mathcal{D}}^2 \in \mathcal{F}$ then for any $\delta \in (0,1)$, prior $\pi \in \mathscr{P}(\Theta)$ and $\xi \in \mathscr{P}(\Theta)$ it holds
\begin{align}
    \E_{(\bx,y) \sim P}\left[ \mathbb{1} \Bigl[\operatorname{MV}_{\xi}(\bx) = y \Bigr] \right] 
    \leq  
     \E_{\xi}\left[ \frac{4}{N_l}\sum_{(\bx,y) \in \mathcal{D}_l}\mathbb{1} \Bigl[h_{\theta}(\bx) = y \Bigr ]    \right] + \sqrt{\frac{8\gamma_{\mathcal{F}}(\xi,\pi) + 8\ln(N_l/\delta) }{N_l-1}}, \nonumber
\end{align}
with probability at least $1 - \delta$.
\end{theorem}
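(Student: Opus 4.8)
The plan is to chain together the two quoted results—Theorem~\ref{theorem_Masegosa} (Masegosa et al.) and Proposition~\ref{proposition_Amit_etal} (Amit et al.)—applied to the tandem loss $\ell(\theta,\theta',\bx,y) = \mathbb{1}[h_\theta(\bx)=y]\cdot\mathbb{1}[h_{\theta'}(\bx)=y]$, which indeed maps into $[0,1]$ so the PAC-Bayes machinery applies. First I would start from Theorem~\ref{theorem_Masegosa} to bound the majority-vote risk $\E_{(\bx,y)\sim P}[\mathbb{1}[\operatorname{MV}_\xi(\bx)=y]]$ by $4\,\E_{\xi(\theta)}\E_{\xi(\theta')}[\E_P[\mathbb{1}[h_\theta(\bx)=y]\mathbb{1}[h_{\theta'}(\bx)=y]]]$. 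The key move is to write this population tandem term as the empirical tandem term $\hat L(\theta,\theta',\mathcal{D}_l)$ plus the generalization gap $\Delta_{\mathcal{D}_l}$ for the tandem loss, then push the bound from Proposition~\ref{proposition_Amit_etal} through the expectation over $\xi\times\xi$.

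The main technical point is handling the square root. Proposition~\ref{proposition_Amit_etal} gives, on an event of probability at least $1-\delta$, a pointwise bound $\Delta_{\mathcal{D}_l}(\theta)\le\sqrt{(\gamma_{\mathcal{F}}(\xi,\pi)+\ln(N_l/\delta))/(2(N_l-1))}$ (here I would need to double-check whether the hypothesis is stated for a single $\theta$ or already for the pair $(\theta,\theta')$—the paper writes $2(N_l-1)\Delta^2_{\mathcal{D}_l}\in\mathcal{F}$, matching the tandem-loss gap directly). Taking $\E_{\xi\times\xi}$ of both sides and using that the right-hand side is constant in $\theta,\theta'$, I get $\E_{\xi}\E_{\xi}[\Delta_{\mathcal{D}_l}]\le\sqrt{(\gamma_{\mathcal{F}}(\xi,\pi)+\ln(N_l/\delta))/(2(N_l-1))}$. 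Multiplying by $4$ and absorbing constants ($4\sqrt{x}=\sqrt{16x}$, so $4\sqrt{(\gamma+\ln(N_l/\delta))/(2(N_l-1))}=\sqrt{8(\gamma+\ln(N_l/\delta))/(N_l-1)}=\sqrt{(8\gamma_{\mathcal{F}}(\xi,\pi)+8\ln(N_l/\delta))/(N_l-1)}$) recovers exactly the stated second term.

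For the first term, I would bound $\E_{\xi(\theta)}\E_{\xi(\theta')}[\hat L(\theta,\theta',\mathcal{D}_l)]$ by dropping one of the two indicators (since $\mathbb{1}[h_{\theta'}(\bx)=y]\le 1$), which gives $\E_{\xi(\theta)}\E_{\xi(\theta')}[\hat L(\theta,\theta',\mathcal{D}_l)]\le\E_{\xi(\theta)}[\frac{1}{N_l}\sum_{(\bx,y)\in\mathcal{D}_l}\mathbb{1}[h_\theta(\bx)=y]]$; multiplying by the leading $4$ yields the first term on the right-hand side of the theorem. Combining the two pieces and noting both hold simultaneously on the $(1-\delta)$-probability event from Proposition~\ref{proposition_Amit_etal} completes the proof.

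The step I expect to require the most care is the compatibility of the measurability/function-class hypothesis: I need to confirm that the tandem-loss generalization gap $\Delta_{\mathcal{D}_l}(\theta,\theta')$ (or its relevant marginalization) is the quantity to which Proposition~\ref{proposition_Amit_etal} applies, and that the assumption $2(N_l-1)\Delta_{\mathcal{D}_l}^2\in\mathcal{F}$ in the theorem statement is precisely what Proposition~\ref{proposition_Amit_etal} demands—this is really just bookkeeping about which distribution ($\xi$ vs.\ the product $\xi\otimes\xi$) the IPM term $\gamma_{\mathcal{F}}$ is evaluated against, but it is the place where a constant or a factor could silently slip, so I would verify it explicitly against the Amit et al.\ statement before declaring the constants $8$ and $8$ correct.
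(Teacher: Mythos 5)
Your proposal assembles the same two ingredients as the paper---Theorem~\ref{theorem_Masegosa} for the majority-vote-to-tandem step and Proposition~\ref{proposition_Amit_etal} for the concentration step---and your constant arithmetic ($4\sqrt{x/(2(N_l-1))}=\sqrt{8x/(N_l-1)}$) matches the paper exactly. The difference is the order of the two middle moves, and it is not just bookkeeping: you propose to decompose the \emph{population tandem} term into the empirical tandem loss plus a tandem generalization gap, apply the IPM PAC-Bayes bound to that gap over $\xi\otimes\xi$, and only afterwards drop one indicator from the empirical term. Done that way, Proposition~\ref{proposition_Amit_etal} would have to be invoked on the product space $\Theta\times\Theta$, and the resulting IPM term would be between the product measures $\xi\otimes\xi$ and $\pi\otimes\pi$ (for a function class on $\Theta\times\Theta$), which is not the $\gamma_{\mathcal{F}}(\xi,\pi)$ appearing in the theorem statement; you would need an extra step relating the two. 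The paper avoids this entirely by performing your two steps in the opposite order: it first drops the second indicator at the \emph{population} level, using $\mathbb{1}[h_{\theta'}(\bx)=y]\le 1$ inside $\E_{(\bx,y)\sim P}$, so that only the single-model $0$--$1$ loss remains, and then applies the decomposition and Proposition~\ref{proposition_Amit_etal} with the gap $\Delta_{\mathcal{D}_l}$ taken under $\xi$ alone. You correctly flag the $\xi$ versus $\xi\otimes\xi$ issue as the delicate point in your last paragraph; the resolution is simply to swap the order of your two steps (both of which you already carry out), after which your argument coincides with the paper's and the stated constants $8$ and $8$ are recovered without further work.
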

\begin{proof}


By using Theorem \ref{theorem_Masegosa}, we have the below inequality
\begin{align}
\E_{(\bx,y) \sim P}\left[ \mathbb{1} \Bigr[\operatorname{MV}_{\xi}(\bx) = y \Bigl] \right] &\leq 4 \E_{\xi(\theta)} \E_{\xi(\theta')} \left[ \E_{(\bx,y) \sim P}\left[ \mathbb{1} \Bigl[h_{\theta}(\bx) = y \Bigr ] \cdot \mathbb{1} \Bigl[h_{\theta'}(\bx) = y\Bigr] \right]  \right] \\
     &\leq 4 \E_{\xi(\theta)}\left[ \E_{(\bx,y) \sim P}\left[ \mathbb{1} \Bigl[h_{\theta}(\bx) = y \Bigr ] \right] \right]\\
    &\leq 4 \E_{\xi}\left[ \frac{1}{N_l}\sum_{(\bx,y) \in \mathcal{D}_l}\mathbb{1} \Bigl[h_{\theta}(\bx) = y \Bigr ]    \right] + 4 \Delta_{\mathcal{D}_l}(\xi) \label{proof_theorem2_line2}\\
    &\leq 4 \E_{\xi}\left[ \frac{1}{N_l}\sum_{(\bx,y) \in \mathcal{D}_l}\mathbb{1} \Bigl[h_{\theta}(\bx) = y \Bigr ]    \right] + \sqrt{\frac{8 \gamma_{\mathcal{F}}(\xi,\pi) + 8 \ln(N_l/\delta)}{N_l-1}}, \label{proof_theorem2_line3}
\end{align}
with probability $1 - \delta$. In Eq. (\ref{proof_theorem2_line2}), we have used the definition of $\Delta_{\mathcal{D}}(\theta) := \E_{(\bx,y) \sim P}[\ell(\theta,x,y)] -  \E_{(\bx,y) \in \mathcal{D}}[\ell(\theta,x,y)]$ for a fixed $\theta' \in \Theta$ and used the Proposition \ref{proposition_Amit_etal} for Eq. (\ref{proof_theorem2_line3}). This concludes our proof.
\end{proof}

\subsection{Representing the two empirical distributions for T-test \label{app_subsec_empirical_distributions}}


We train XGB as the main classifier using different sets of hyperparameters. Let denote the predictive probability using $M$ classifiers across $N$ unlabeled data points over $K$ classes as $P \in \realset^{M \times N \times K}$. Given a data point $i$, we define the highest score and second-highest score classes as follows: $\diamond = \arg \max_{k= \{1...K\}} \frac{1}{M} \sum_{m=0}^{M} P_{m,i,k} $ and $\oslash = \arg \max_{ k= \textcolor{blue}{ \{1...K\} \setminus \diamond}} \frac{1}{M} \sum_{m=0}^{M} P_{m,i,k} $ where $\textcolor{blue}{ \{1...K\} \setminus \diamond}$ means that we exclude an index $\diamond$ from a set $\{1...K\}$. Note that these indices of the highest $\diamond$ and second-highest score class $\oslash$ vary with different data points and thus can also be defined as the function $\diamond(i)$ and $\oslash(i)$. 


We consider a predictive probability of a data point $i=0$, i.e., $P_{M,i=0,K} \in \realset^{M \times K}$. We then define two empirical distributions of predicting the highest $\mathcal{N}(\mu_{i,\diamond	},  \sigma^2_{i,\diamond} )$ and second-highest class $\mathcal{N}(\mu_{i,\oslash	},  \sigma^2_{i,\oslash} )$, such as the empirical means are:

\begin{minipage}{0.48\textwidth}
\vspace{-10pt}
\begin{align}
\mu_{i,\diamond	} = \frac{1}{M} \sum_{m=1}^M p_m(y=\diamond \mid \bx_i)
\end{align}
\end{minipage}
\begin{minipage}{0.48\textwidth}
\vspace{-10pt}
\begin{align}
\mu_{i,\oslash} = \frac{1}{M} \sum_{m=1}^M p_m(y=\oslash \mid \bx_i)
\end{align}
\end{minipage}

and the variances are defined respectively as

\begin{minipage}{0.48\textwidth}
\vspace{-10pt}
\begin{align}
    \sigma^2_{i,\diamond} = \frac{1}{M} \sum_{m=1}^M \Bigl( p_m(y=\diamond \mid \bx_i) - \mu_{i,\diamond} \Bigr)^2 
\end{align}
\end{minipage}
\begin{minipage}{0.5\textwidth}
\vspace{-10pt}
\begin{align}
    \sigma^2_{i,\oslash} =  \frac{1}{M} \sum_{m=1}^M \Bigl( p_m(y=\oslash \mid \bx_i) - \mu_{i,\oslash} \Bigr)^2. 
\end{align}
\end{minipage}




\section{Appendix: Additional Ablation Studies \label{app_additional_ablation}}

\begin{table}
\caption{Comparison with different choices for estimating the confidence scores. The performance with total variance reflects the uncertainty choice used in UPS \citep{rizve2020defense}. CSA using T-test achieves the best performance. We also highlight that using CSA (either with Total variance of T-test) will consistently outperform SLA. \label{table:compare_Ttest_totalvar_entropy}}
\vspace{-2pt}
\begin{center}
\begin{tabular}{c | c c c c} 
 Datasets & SLA & CSA Entropy & CSA Total variance    & CSA T-test    \\ 
 
 \hline\hline
 
Segment & $95.82\pm1$ & $95.74\pm1$ &  $95.70\pm1$ & $\textbf{95.90}\pm1$  \\ 
 \rowcolor{lightgray}
Wdbc & $90.64\pm3$ & $89.50\pm4$ & $91.62\pm3$ &  $\textbf{91.83}\pm3$   \\
 
 Analcatdata & $90.43\pm3$ &  $92.00\pm3$ & $92.40\pm2$ &  $\textbf{94.79}\pm2$   \\
  \rowcolor{lightgray}

German-credit & $70.79\pm3$&  $ 70.88\pm3$ & $71.75\pm2$& $\textbf{71.47}\pm3$  \\

Madelon & $55.96\pm3$& $57.50\pm2$ & $55.56\pm3$ & $57.51\pm2$  \\
  \rowcolor{lightgray}

Dna & $87.86\pm2$ &  $88.23\pm1$ & $87.12\pm2$ & $\textbf{89.24}\pm1$ \\
Agaricus Lepiota & $59.01\pm1$ & $57.67\pm1$  & $59.59\pm1$ & $59.53\pm1$  \\
 \rowcolor{lightgray}

Breast cancer & $92.65\pm2$   &  $93.33\pm2$ & $93.25\pm2$ & $\textbf{93.55}\pm2$  \\
 Digits & $81.03\pm3$&  $84.65\pm3$ & $84.37\pm3$ & $\textbf{86.31}\pm3$ 
  
\end{tabular}
\end{center}
\end{table}

\subsection{Experiments with different uncertainty estimation choices \label{app_total_entropy_consideration}}

\textbf{Total variance.}
In multilabel classification settings \cite{Kapoor_2012Multilabel}, multiple high score classes can be considered together. Thus, the T-test between the highest and second-highest is no longer applicable. We consider the following  total variance across classes as the second criteria. This total variance for uncertainty estimation has also been used in UPS \cite{rizve2020defense} using Monte Carlo dropout \cite{gal2016dropout} in the context of deep neural network: 
\begin{align}
\vspace{-16pt}
    \mathcal{V}[p(y\mid \bx)] \approx \frac{1}{K} \sum_{k=1}^K \Bigl[ \underbrace{ \frac{1}{M} \sum_{m=1}^M \Bigl( p_{mk} - \sum_{m=1}^M \frac{p_{mk} }{M} \Bigr)^2 }_{\textrm{variance of assigning } \bx \textrm{ to class } k} \Bigr]
    \vspace{-4pt}
\end{align}
where $p_{mk} := p_m (y =k \mid \bx )$ for brevity.
Data points with high uncertainty measured by the total variance are excluded because a consensus of multiple classifiers is generally a good indicator of the labeling quality. In our setting, a high consensus is represented by low variance or high confidence. While the T-test naturally has a threshold of $2$ to reject a data point, the total variance does not have such a threshold and thus we need to impose our own value. In the experiment, we reject $50\%$ of the points with a high total variance score.  





In Appendix Table \ref{table:compare_Ttest_totalvar_entropy}, we empirically compare the performance using all of the different uncertainty choices, including the entropy criteria \cite{malinin2020uncertainty} (see Appendix \ref{app_total_entropy_consideration}), and show that the Welch T-test is the best for this pseudo-labeling problem which can explicitly capture the uncertainty gap in predicting the highest and second-highest classes while total variance and entropy can not.

\textbf{Entropy.}
We consider the \textit{entropy} as another metric for estimating the confidence score for each data point. The entropy across multiple ensembles is as known as the total uncertainty \citep{malinin2019uncertainty,malinin2020uncertainty}, which is the sum of data and knowledge uncertainty. By considering the mutual information, we can estimate the \textit{total uncertainty} as follows: 
\begin{align}
    \mathbb{H} \left[ p(y\mid \bx, D) \right] \sim \mathbb{H} \left[\frac{1}{M} \sum_{m=1}^M \underbrace{ p_m \Bigl(y = 1,...,K \mid \bx \Bigr)}_{\in \realset^K} \right]
\end{align}
where $p_m(y\mid \bx) := p(y\mid \bx , \theta^{(m)})$ for brevity. We compare this total uncertainty against the proposed T-test and the total variance in Table \ref{table:compare_Ttest_totalvar_entropy}. However, this entropy does not perform as well as the T-test and total variance. The possible explanation is that this total uncertainty can not well capture the variation in these class predictions, such as can not explicitly measure the gap between the highest and second-highest classes. Additionally in Table \ref{table:compare_Ttest_totalvar_entropy}, we also show that the proposed T-test outperforms the total variance which is used in the existing UPS \citep{rizve2020defense}.
In addition, we highlight that using CSA (either with Total variance of T-test) will consistently outperform SLA.

\begin{figure}
\vspace{-5pt}
    \centering
    \includegraphics[width=0.43\textwidth]{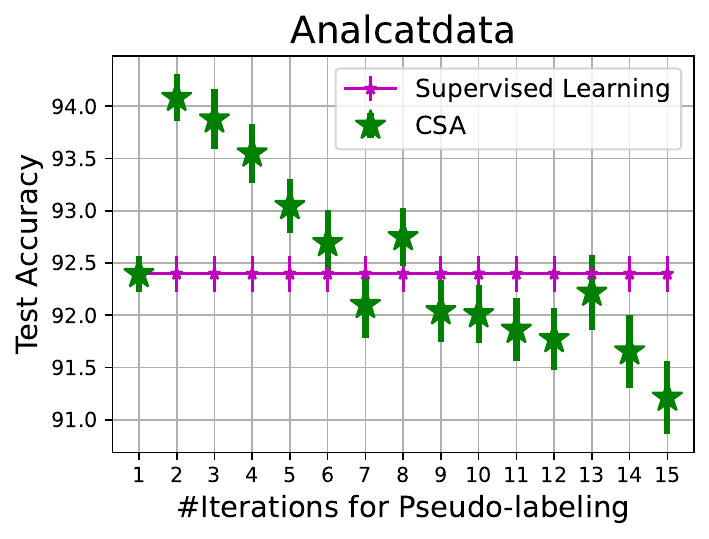} \hfill
    \includegraphics[width=0.42\textwidth]{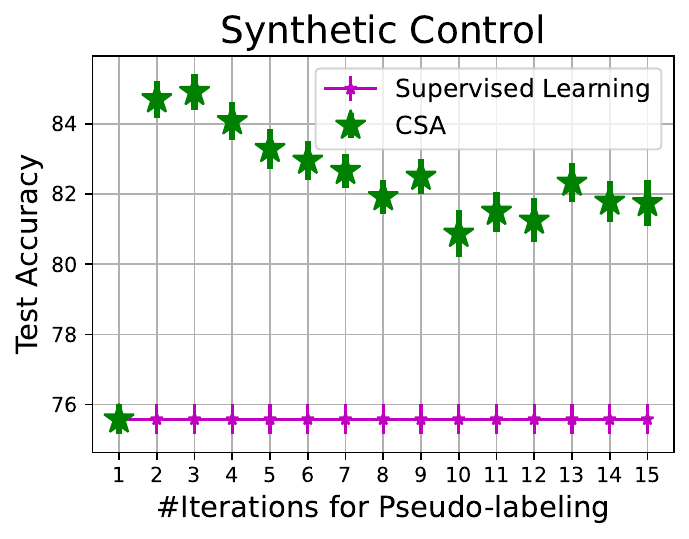}

    \includegraphics[width=0.43\textwidth]{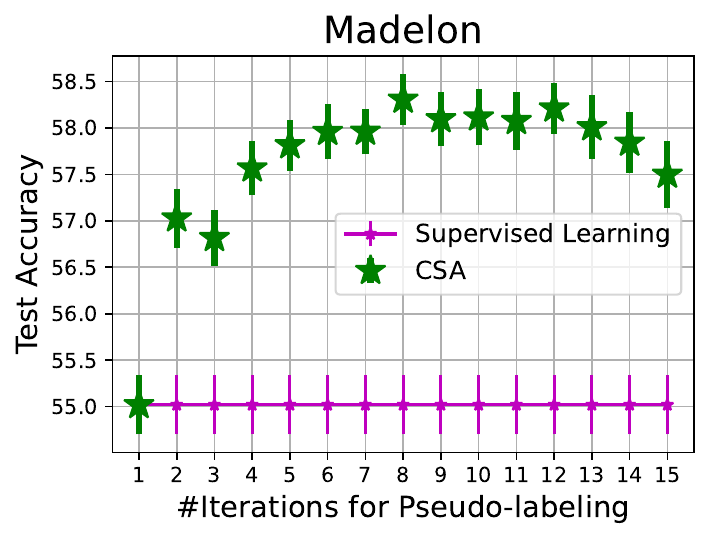} \hfill
    \includegraphics[width=0.43\textwidth]{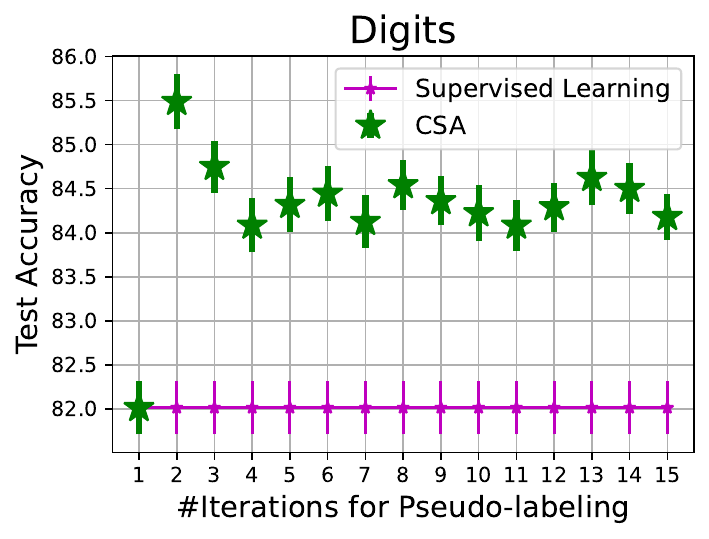}
    \caption{Performance with different choices of the iterations $T \in [1,...,15]$ for pseudo-labeling.}
    \label{fig:perf_wrt_iter}
    \vspace{-14pt}
\end{figure}

\subsection{Experiments with different number of iterations $T$ for pseudo-labeling \label{app_sec_performance_wrt_no_iterations}}

Pseudo-labeling is an iterative process by repeatedly augmenting the labeled data with selected samples from the unlabeled set. Under this repetitive mechanism, the error can also be accumulated during the PL process if the wrong assignment is made at an earlier stage. Therefore, not only the choice of pseudo-labeling threshold is important, but also the number of iterations $T$ used for the pseudo-labeling process can affect the final performance.

Therefore, we analyze the performance with respect to the choices of the number of iterations $T \in [1,...,15]$ in Fig. \ref{fig:perf_wrt_iter}. We show an interesting fact that the best number of iterations is unknown in advance and varies with the datasets. For example, the best  $T=2$ is for Analcatdata while $T=3$ is for Synthetic control, $T=8$ for Madelon and $T=2$ for Digits. Our CSA is robust with different choices of $T$ for Synthetic control, Madelon and Digits that our method surpasses the Supervised Learning (SL) by a wide margin. Having said that, CSA with $T>6$ on Analcatdata can degrade the performance below SL. Therefore, we recommend using $T\le 5$, and in the main experiment, we have set $T=5$ for all datasets.

\subsection{Experiments with different number of XGB models $M$ for ensembling \label{app_section_varyingM}}

To estimate the confidence score for each data point, we have ensembled $M$ XGBoost models with different sets of hyperparameters. Since we are interested in knowing how many models $M$ should we use for the experiments. We below perform the experiments with varying the number of models $M = [1,...,30]$ in Fig. \ref{fig:perfm_wrt_M}. We have shown on two datasets that using $M \ge 5 $ will be enough to estimate the confidence score to achieve the best performance.

Additionally, we have demonstrated that making use of the confidence score (the case of CSA) will consistently improve the performance against without using it (the case of SLA). This is because we can ignore and do not assign labels to data points which are in low confidence scores, such as high variance or overlapping between the highest and second-highest class. These uncertain samples can be seen at the dark area in Fig. \ref{fig:total_var_ttest_examples}.

\begin{figure}
    \centering
        \includegraphics[width=0.42\textwidth]{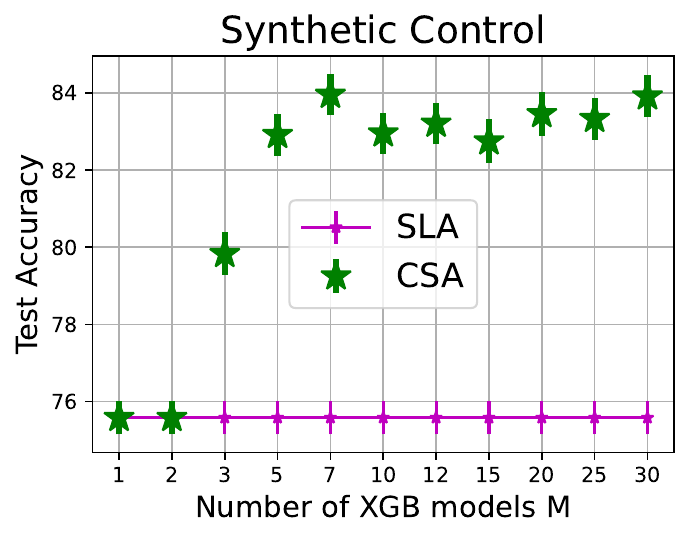} \hfill
    \includegraphics[width=0.43\textwidth]{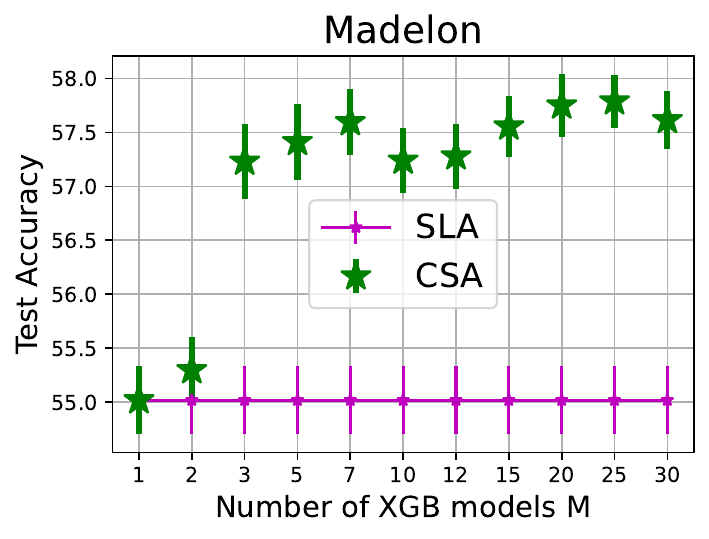}

    \caption{Experiments with different numbers of XGB models for ensembling. The results suggest that using $M\ge 5$ XGBoost models will sufficiently provide a good estimation for the confidence level. Overall, we show that using the confidence score (CSA) will improve the performance than without using it (SLA).}
    \label{fig:perfm_wrt_M}
\end{figure}

\subsection{Computational Time \label{app_sec_computational_time}}

We analyze the computational time used in our CSA using Madelon dataset. As part of the ensembling task, we need to train $M$ XGBoost models. As shown previously in Appendix \ref{app_section_varyingM}, the recommended number of XGBoost models is $M \ge 5$ and we have used $M=10$ in all experiments. Note that training $M$ XGBoost models can be taken in parallel to speed up the time.

\begin{figure}
    \centering
    \includegraphics[width=0.6\textwidth]{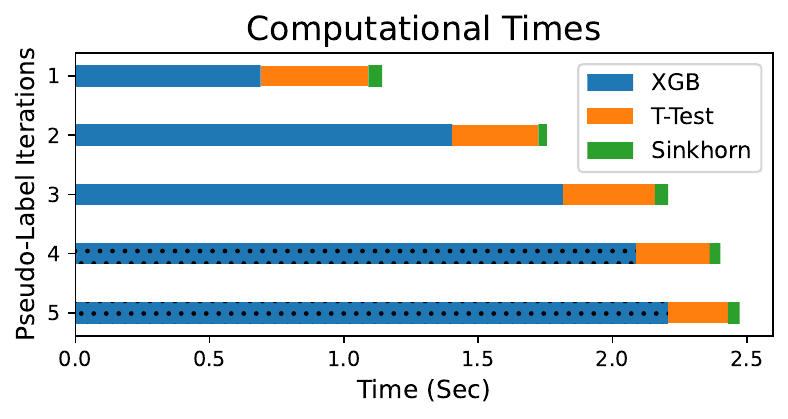}
\caption{The computational time per each component in CSA for Madelon dataset. We show that the time used for computing both T-test and Sinkhorn is negligible as approximately within a second while the time for XGBoost training takes longer. \label{fig:time_computation}}
\end{figure}

We present in Fig. \ref{fig:time_computation} a relative comparison of the computational times taken by each component in CSA, including: time for training a single model of XGBoost, time for calculating T-test, and time used by Sinkhorn's algorithm. We show that the XGB training takes the most time, T-test estimation and Sinkhorn algorithm will consume less time. Especially, Sinkhorn's algorithm takes an unnoticeable time, such as $0.1$ sec for Madelon dataset.

We also observe that the XGBoost computational time used in each iteration will increase with iterations because more data samples are augmented into the labeled set at each pseudo-label iteration. On the other hand, the T-test will take less time with iteration because the number of unlabeled samples reduces over time.

To handle the high complexity due to the growing dataset, one potential solution is to consider online learning to incrementally update the model for XGB, without retraining from the scratch \citep{montiel2020adaptive}.



\begin{figure}
\centering
     \includegraphics[width=0.5\textwidth]{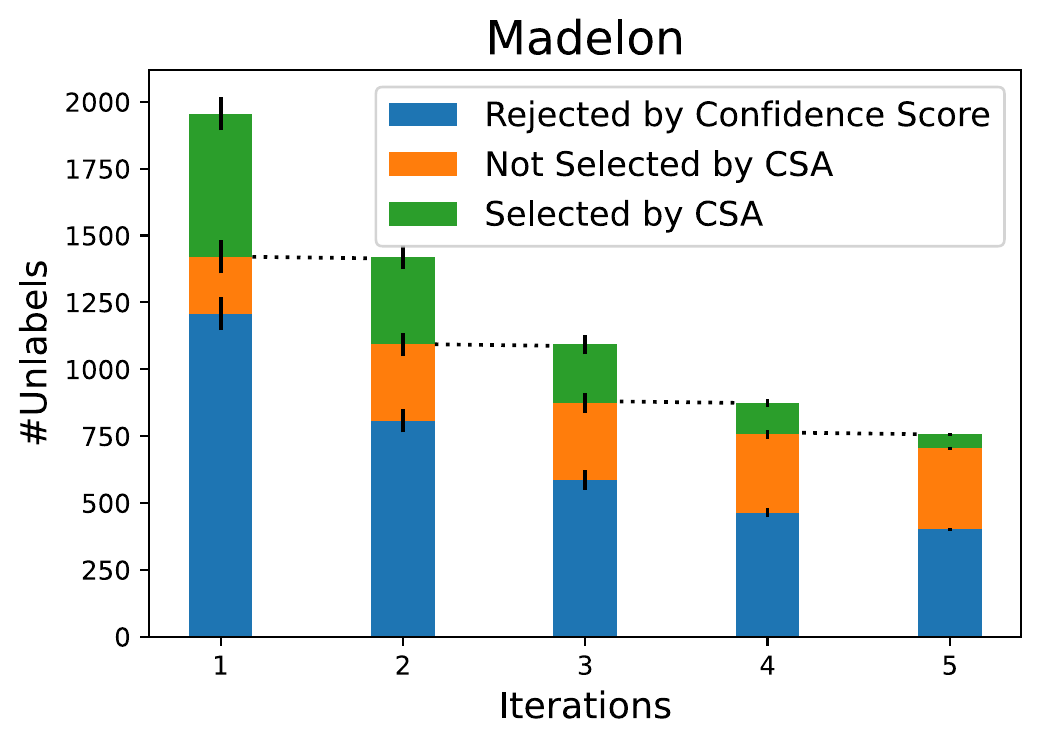}
     \vspace{-2pt}
    \caption{The statistic of the number of samples rejected by the confidence score (t-test), selected by CSA and not selected. The remaining unlabeled data in the next iteration excludes the samples being selected by CSA. Thus, the number of unlabeled samples is  reduced across pseudo-label iterations.}
    \label{fig:no_point_rejected_selected}
    \vspace{-2pt}
\end{figure}
\begin{table}
\vspace{-2pt}
\caption{Comparison on the effect of introducing the lower bound constraint $\bw_-$ in the optimal transport formulation.  We show that adding the $\bw_-$ into the OT formulation will help the performance slightly and adding the confident score will boost the performance significantly. \label{table:compare_lower_bound}}
\vspace{-1pt} 

\begin{center}
\begin{tabular}{c | c c c} 
 \multirow{2}{*}{Datasets} & OT without $\bw_-$ & OT with  $\bw_-$ & OT with $[\bw_-,\bw^+]$  \\ 
 & SLA \citep{tai2021sinkhorn_sla} & CSA without confidence & CSA with confidence\\
 
 \hline\hline
Segment &  $95.80\pm1$ &  $\textit{95.96}\pm1$ &  $\textbf{95.90}\pm1$  \\ 
 \rowcolor{lightgray}
Wdbc &  $90.61\pm2$  &  $\textit{91.27}\pm3$ &    $\textbf{91.83}\pm3$   \\

 Analcatdata &  $90.98\pm2$ &  $\textit{91.38}\pm3$ &  $\textbf{96.60}\pm2$   \\
  \rowcolor{lightgray}

German-credit  &  $70.72\pm3$ &  $\textit{70.93}\pm3$ &   $\textbf{71.47}\pm3$  \\
Madelon &  $56.53\pm4$ &  $\textit{56.87}\pm4$ &  $\textbf{57.51}\pm3$  \\
   \rowcolor{lightgray}

Dna  &  $88.09\pm2$ &  $\textit{89.14}\pm2$ &  $\textbf{89.24}\pm1$ \\

Agar Lepiota  &  $58.96\pm1$ &  $\textit{59.12}\pm1$ &  $\textbf{59.53}\pm1$  \\
   \rowcolor{lightgray}
Breast cancer &  $92.76\pm2$ &   $\textit{92.81}\pm2$ &  $\textbf{93.55}\pm2$  \\
 Digits &  $81.51\pm3$ &  $\textit{81.68}\pm3$ &  $\textbf{88.10}\pm2$ 
   \\
 \hline
   
\end{tabular}
\end{center}

\vspace{-2pt}
\end{table}

\begin{table}
 \caption{XGBoost hyperparameters range for ensembling.}
    \label{tab:xgb_hypers}
    \vspace{-1pt}
    \centering
    \begin{tabular}{c|c c}
    Name & Min & Max \\
    \hline
         Learning rate & 0.01 & 0.3  \\
         Max depth & 3 & 20  \\
         Subsample & 0.5 & 1 \\
         Colsample\_bytree & 0.4 & 1 \\
         Colsample\_bylevel & 0.4 & 1 \\
         n\_estimators & 100 & 1000\\
         \hline
    \end{tabular}
\end{table}

\subsection{Statistics on the number of points rejected by T-test and accepted by CSA}
We plot the analysis on the number of samples accepted by CSA as well as rejected by the confidence score w.r.t. iterations $t=1,...,T$ in Fig. \ref{fig:no_point_rejected_selected}. We can see that the time used for training XGBoost model is increasing over iterations because more samples are added into the labeled set which enlarges the training set. Overall, the computational time for XGB and T-test is within a second.

\subsection{Effect of the Lower Bound $\bw_-$ for the Optimal Transport Assignment \label{app_sec_effect_lowerbound}}
In the proposed CSA, we have modified the original optimal transport formulation in \citet{tai2021sinkhorn_sla} to introduce the lower bound $\bw_-$ to ensure that the proportion of the assign labels in each class should be above this threshold. This design is useful in practice to prevent from being dominated by the high-frequent class while low-frequent class receives low or zero allocation. We summarize the comparison in Table \ref{table:compare_lower_bound} which shows that introducing the lower bound $\bw_-$ will improve marginally the performance (middle column). Then, introducing the confident estimation in CSA will significantly boost the performance (right column).

\subsection{Effect of Ensembling to the Final Performance}
Ensembling XGB is part of the main algorithm to estimate the confidence level. In this section, we aim to demonstrate that the final performance improvement comes primarily by other components of CSA, such as confidence score filtering, optimal transport assignment. We run an ablation study to learn the effect of ensembling in isolation. We show in Table \ref{tab:effect_ensembling} that the gain from the ensemble toward the final performance is \textit{not that significant}, as opposed to the uncertainty filtering and OT assignment. In particular, ensembling XGBs will improve over Analcatdata and Digits datasets while degrade slightly for the other datasets (Wdbc, German-credit, Breast cancer).

\begin{table}
 \caption{Ablation study on the effect of ensembling. We discern the individual contributions of the ensemble effect and the proposed algorithm to the overall performance improvement. }
    \label{tab:effect_ensembling}
    \vspace{-1pt}
    \centering
    \begin{tabular}{c|c c c}
    Datasets & PL (single XGB) & PL (ensemble XGBs) & CSA \\
    \hline
         Wdbc & $\textit{91.23} \pm 3$ & $90.22 \pm 3$ & $\textbf{91.83} \pm 3$  \\
          \rowcolor{lightgray}
         Analcatdata & $90.95 \pm2$ & $\textit{91.84}\pm3$ & $\textbf{96.60}\pm2$  \\
         German-credit & $\textit{70.72}\pm3$ & $70.15\pm2$ & $\textbf{71.47}\pm3$ \\
          \rowcolor{lightgray}
         Breast cancer & $\textit{92.89}\pm2$ & $92.63\pm2$ & $\textbf{93.55}\pm2$ \\
         Digits & $81.67\pm3$ & $\textit{82.91}\pm3$ & $\textbf{88.10}\pm2$ \\
         \hline
    \end{tabular}
\end{table}


\section{Appendix: Experimental Settings \label{app_sec_exp_settigs}}

We provide additional information about our experiments. We empirically calculate the label frequency vector $\bw \in \realset^K$  from the training data, then set the upper $\bw_+ = 1.1 \times \bw$ and lower $\bw_- = 0.9 \times \bw$ while we note that these values can vary with the datasets. We set the Sinkhorn regularizer $\epsilon=0.01$. We use $T=5$ pseudo-label iterations for the main experiments while we provide analysis with different choices of the iterations in Appendix \ref{app_sec_performance_wrt_no_iterations}.

We present the hyperparameters ranging for XGBoost in Table \ref{tab:xgb_hypers}. Then, we summarize statistics for all of the datasets used in our experiments in Table \ref{table:dataset}.


We design to allocate more data points at the earlier iterations and less at later iterations by setting a decreasing vector over time $\rho_t = (T-t+1)/(T+1)$ , where $T$ is the maximum number of pseudo iterations, which is then normalized $\rho_t = \frac{\rho_t}{\sum_{\forall t} \rho_t}$.

\section{Appendix: Additional Experiments}


We present a further comparison with the pseudo-labeling baselines in  Fig. \ref{fig:appendix_experiments_pl_per_iterations} using $6$ additional datasets.

\begin{table}
\caption{Dataset statistics including the number of classes $K$, number of feature $d$, number of test samples, number of labeled, unlabeled and the ratio between labeled versus unlabeled (last column). 
\label{table:dataset}}
\begin{center}
\begin{tabular}{c |c c c c c c c|} 
\rule{0pt}{1pt}

\multirow{2}{*}{Datasets} & \multirow{2}{*}{Domains} & \multirow{1}{*}{\#Classes}  &  \multirow{1}{*}{\#Feat} & \multirow{2}{*}{\#Test}  & \multicolumn{2}{c}{Train} \\ 
 
  \cline{6-7}

 & & $K$ &  $d$ &  & \#Labeled  & \#Unlabeled   \\ 


 \hline
 Segment & image & 7&19& 462 &739& 1109   \\ 
 
 \rowcolor{lightgray}
 Wdbc  & medical & 2 & 30 & 114 & 45 & 410 \\ 

 Analcatdata & economic & 4 &70&169&67&605 \\ 
  \rowcolor{lightgray}

 German-credit & finance & 2&24&200&160&640  \\

 Madelon & artificial &2&500&520&124&1956 \\ 

  \rowcolor{lightgray}

 Dna & biology &3 &180 & 152& 638 & 2,396 \\

 Agaricus lepiota & biology & 7&22& 1625 &3,249 & 3,249 \\

  \rowcolor{lightgray}

 Breast cancer & image &2&30&114&91&364 \\

 Digits & image &10&64&360 &287&1150 \\

 \rowcolor{lightgray}
 Cifar-10 & image &10& 32 x 32& 10,000 & 1000 & 49,000 \\
 
 \hline 
Yeast  & biology & 14 &103  & 726 & 845 & 846 \\

\rowcolor{lightgray}
 Emotions & audio   & 6 & 72& 178 & 207 & 208 \\


\hline
\end{tabular}
\end{center}
\end{table}

\begin{figure*}
    \centering
    \includegraphics[width=0.45\textwidth]{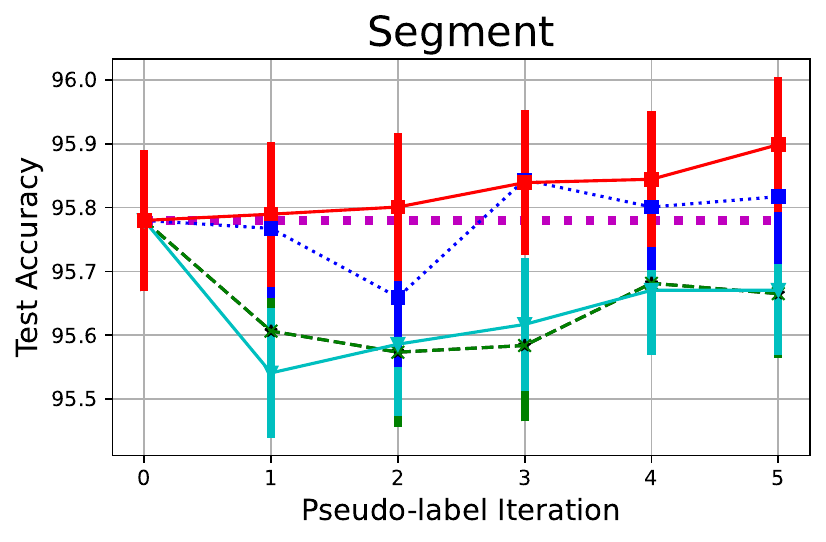}
    \includegraphics[width=0.45\textwidth]{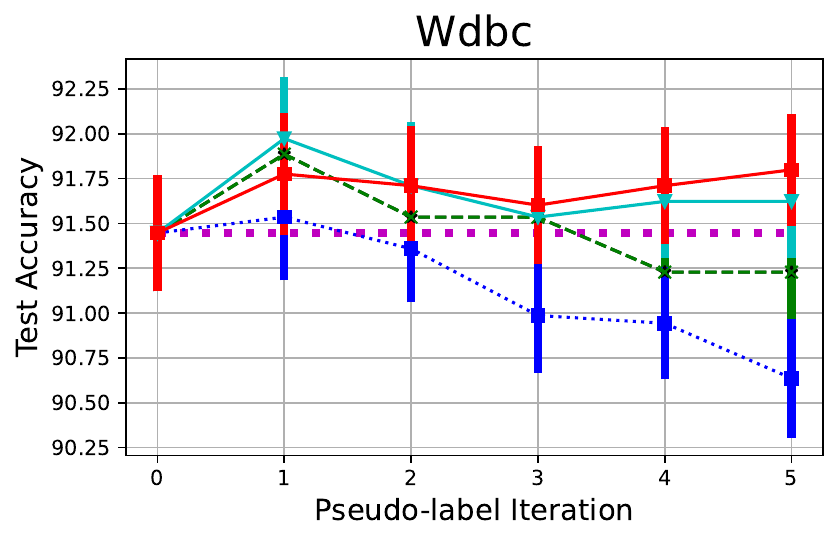}

    \includegraphics[width=0.45\textwidth]{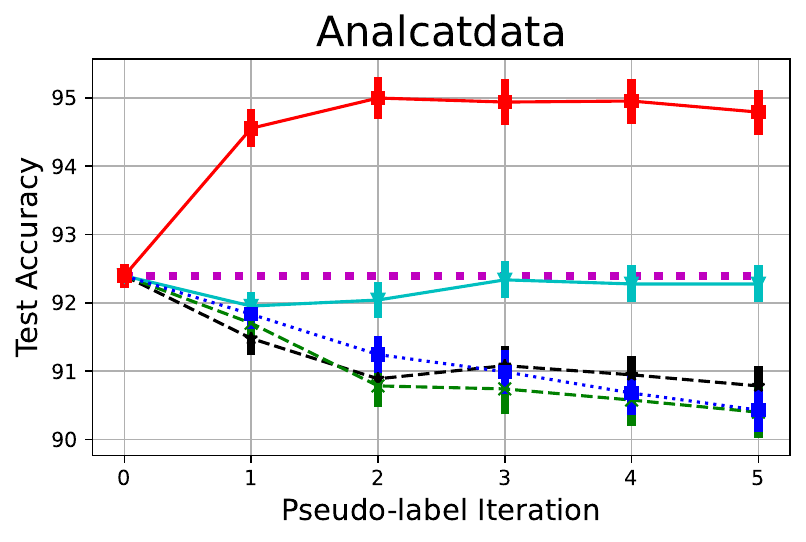}
    \includegraphics[width=0.45\textwidth]{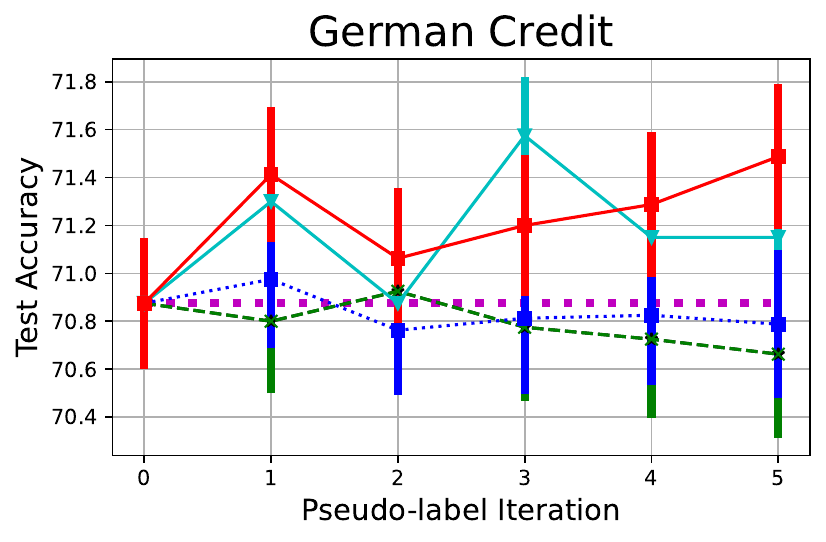}
    
        \includegraphics[width=0.45\textwidth]{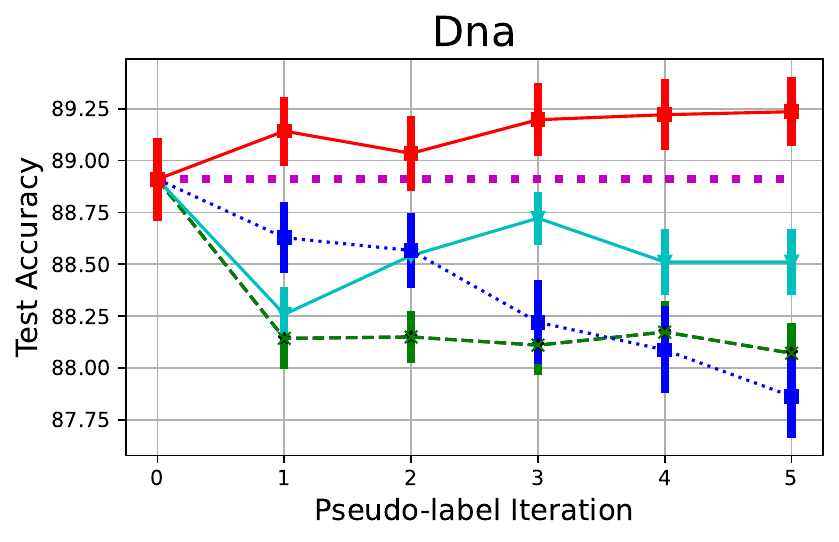}
    \includegraphics[width=0.45\textwidth]{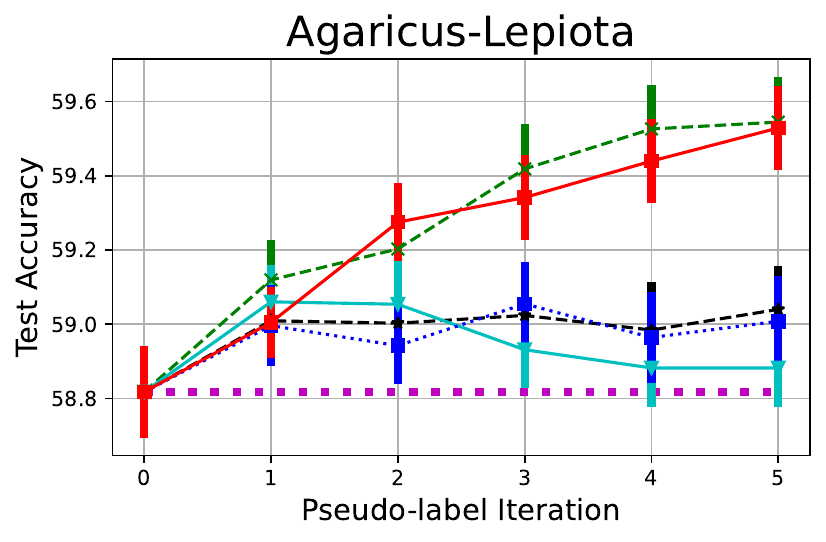}

    \includegraphics[width=0.45\textwidth]{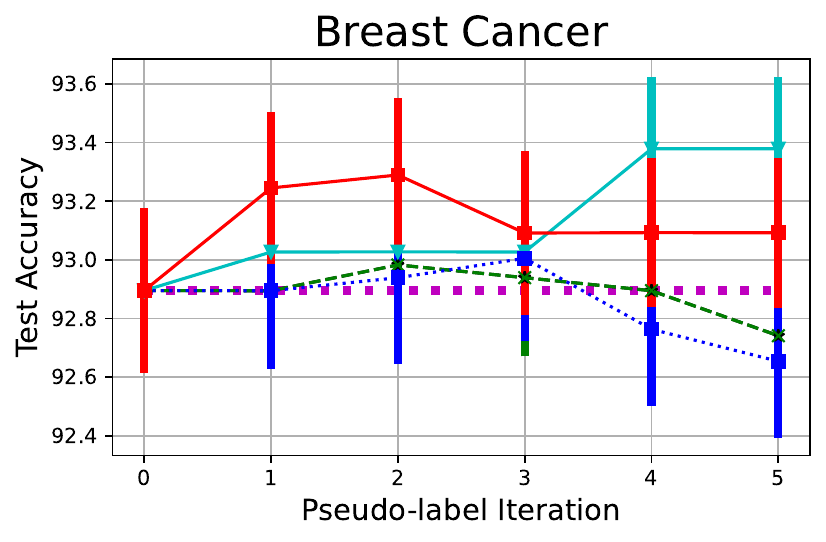}
     \includegraphics[width=0.45\textwidth]{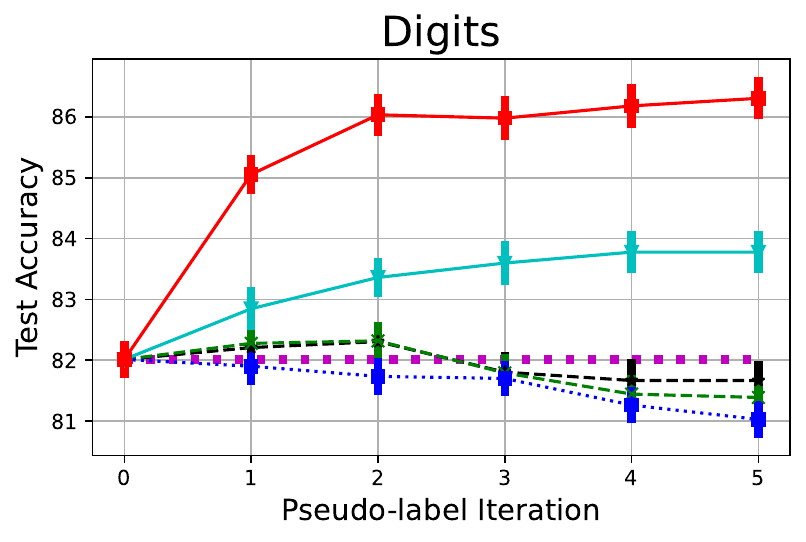}

    \includegraphics[width=1\textwidth]{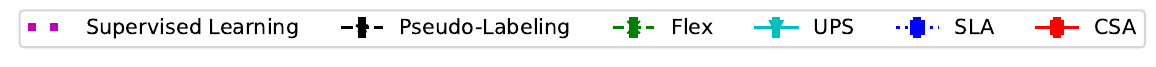}
        
    \caption{Additional experimental results in comparison with the pseudo-labeling methods on tabular data. The results clearly demonstrate the efficiency of the proposed CSA against the baselines.}
    \label{fig:appendix_experiments_pl_per_iterations}
\end{figure*}

\end{document}